%% file: main.tex
\documentclass[11pt]{article}
\pdfoutput=1
\input{mysty}
\usepackage {color}

\begin{document}

\title{\huge A Continuous-Time Reinforcement Learning Framework for Fine-Tuning Discrete Diffusion Models} 
\author{
Zikun Zhang\thanks{Department of Industrial Engineering and Operations Research, Columbia University, New York, NY 10027.
Emails: \texttt{zz3367@columbia.edu}, \texttt{js6646@columbia.edu},
\texttt{yao@columbia.edu}, \texttt{wt2319@columbia.edu}.}
\and
Jiayuan Sheng\footnotemark[1]
\and
David D. Yao\footnotemark[1]
\and
Wenpin Tang\footnotemark[1]
}
\date{}
\maketitle

\begin{abstract}
We formulate reinforcement learning (RL) in continuous time with discrete state spaces and possibly arbitrary action spaces via a stochastic control approach,
where the state dynamics are modeled as a controlled continuous-time Markov chain (CTMC).
We consider policy optimization problems
and derive the corresponding policy gradient methods, 
leading to continuous-time variants of proximal policy optimization (PPO) and group relative policy optimization (GRPO). 
As a primary application, we develop a complete continuous-time RL framework for fine-tuning score-based discrete diffusion models. 
The proposed framework enables reward-driven optimization without requiring differentiability on the reward signals.
In contrast to the existing GRPO-based approaches that only rely on terminal rewards, our formulation allows intermediate reward or advantage signals to be incorporated throughout the denoising trajectory. 
Importantly, when specialized to masked diffusion models (MDMs), our framework encompasses a rich class of policy parameterizations over the vocabulary simplex with analytically tractable probability ratios, 
providing a unified perspective on exploration and policy optimization in MDMs. For masked diffusion large language models (dLLMs), we further propose trajectory subsampling techniques to efficiently estimate computationally prohibitive trajectory likelihoods, reducing the computational cost of computing per-position probability ratios. We showcase the effectiveness of our methods on both low-dimensional entropy-regularized optimization problems and RL post-training of dLLMs on mathematical reasoning and coding tasks.

\end{abstract}

\input{section1}

\input{section2}

\input{section3}

\input{section4}

\input{section5}

\input{section6}

\section*{Acknowledgement}
Tang is supported by NSF CAREER Award DMS-2538791 and the Tang Family Assistant Professorship.
Sheng and Zhang are supported by NSF Grant DMS-2206038.
This research is part of a Columbia-CityU/HK collaborative project that is supported by InnoHK
Initiative, The Government of the HKSAR and the AIFT Lab.

{
        \bibliographystyle{plainnat}
        \bibliography{ref.bib}
}

\input{appendix}

\end{document}

%% file: mysty.tex
\usepackage[colorlinks=true, linkcolor=red, citecolor=blue, urlcolor=blue]{hyperref}

\usepackage[round]{natbib}

\usepackage{algorithm}
\usepackage{algorithmicx}
\usepackage{algpseudocode}

\usepackage[utf8]{inputenc} 
\usepackage[T1]{fontenc}    
\usepackage[english]{babel}

\usepackage{amsfonts}
\usepackage{nicefrac}       
\usepackage{microtype}      
\usepackage{lmodern}
\usepackage{amssymb,amsmath,amsthm}
\usepackage{stmaryrd}
\usepackage{bbm,bm}
\usepackage{latexsym}
\usepackage{xcolor}

\usepackage{enumerate}
\usepackage{enumitem}
\usepackage{verbatim}
\usepackage{booktabs}       
\usepackage{multirow}
\usepackage[most]{tcolorbox}
\usepackage{caption}
\usepackage{subcaption}

\usepackage{url}            
\usepackage[colorlinks=true]{hyperref}
\usepackage[capitalise,noabbrev]{cleveref}
\usepackage{caption}

\usepackage{parskip}
\usepackage{geometry}
\usepackage[short]{optidef}
\usepackage{algorithm}

\usepackage{thmtools}

\declaretheorem{theorem}

\declaretheorem{lemma}
\declaretheorem{proposition}

\declaretheoremstyle[qed=$\square$]{definitionwithend}

\declaretheorem[style=definitionwithend]{example}
\declaretheorem[style=definitionwithend]{remark}

\crefname{assumption}{Assumption}{Assumptions}
\crefname{fact}{Fact}{Facts}
\crefname{question}{Question}{Questions}
\crefname{observation}{Observation}{Observations} 

\definecolor{gold}{rgb}{0.85,0.65,0}

\def\N{{\mathbb{N}}}

\def\P{{\mathbb{P}}}

\def\R{{\mathbb{R}}}

\def\Z{{\mathbb{Z}}}

\def\cA{{\cal A}}

\def\cD{{\cal D}}

\def\cF{{\cal F}}

\def\cL{{\cal L}}
\def\cM{{\cal M}}
\def\cN{{\cal N}}

\def\cS{{\cal S}}
\def\cT{{\cal T}}

\def\cV{{\cal V}}

\def\cX{{\cal X}}

\def\a{{\mathbf a}}

\def\e{{\mathbf e}}
\def\f{{\mathbf f}}

\def\o{{\mathbf o}}
\def\p{{\mathbf p}}

\def\x{{\mathbf x}}
\def\y{{\mathbf y}}

\DeclareMathOperator*{\argmin}{arg\,min}
\DeclareMathOperator*{\argmax}{arg\,max}

\DeclareMathOperator*{\E}{\mathbb{E}}

\newcommand{\rmd}{\mathrm{d}}

\usepackage{soul}

\usepackage{amsmath,amssymb,amsthm}

\usepackage{amsthm}

\newcommand{\inner}[2]{\langle #1,\, #2 \rangle}

%% file: section1.tex
\section{Introduction}

Score-based diffusion models, which was originally designed for image generation in continuous space 
\citep{ho2020denoising, song2021score}, have recently emerged as a powerful paradigm for discrete data generation in the context of LLMs \citep{austin2021structured,campbell2022continuous,sun2023scorebased,lou2023discrete}. 
For discrete diffusion models with the forward process formulated by a continuous-time Markov chain (CTMC), one learns the \textit{discrete score function} \citep{lou2023discrete,meng2022concrete,benton2024denoising}, 
i.e., the probability ratio of the forward marginals, to generate the reverse process. 
A notable example is masked diffusion models \citep{shi2024simplified,sahoo2024simple} that corrupt data into fully masked sequences and train a denoiser to predict the clean data from the masked tokens.
As was shown by \cite{shi2024simplified}, the score function reduces to the denoiser probability vector on the vocabulary space for masked positions, making the model formulation, training, and inference much simpler by directly predicting the clean data at each denoising step ($x_0$-prediction).
This simplified structure gains dominant popularity, and further enhances the rapid development of mask-based diffusion large language models \citep{nie2026large,ye2025dream,khanna2025mercury,liu2025wedlm,cheng2025sdar,zhu2026llada,nie2026improved}.
DLLMs mark a promising generative modeling paradigm for text generation because they enable more flexible parallel and any-order token decoding, in contrast to the traditional left-to-right sequence decoding in autoregressive large language
models (ARMs). 

Fine-tuning discrete diffusion models via reinforcement learning (RL) is challenging because sampling from the discrete categorical distribution is non-differentiable, so standard gradient methods are not applicable. 
Specific to language models, the ARM fine-tuning process is naturally constructed as a Markov Decision Process (MDP) thanks to the left-to-right decoding, in which the action is to predict the next token with the state being the current decoded sequence, leading to a factorized policy likelihood by the chain rule. 
However, it is difficult to scale and enhance the reasoning capabilities of dLLMs by online RL that has been well-developed in ARM post-training,
because of the intractable likelihood of the generated sequences in any-order decoding.
Recent methods, e.g., d1 \citep{zhao2025d}, UniGRPO \citep{yang2026mmada}, wd1 \citep{tang2026wd}, SPG \citep{wang2026spg}, and d2 \citep{wang2026d2}, focused on estimating the trajectory log-likelihood, either by heuristic estimators lacking a clear probabilistic interpretation or by tractable evidence lower and upper bounds. While these approaches have achieved promising empirical performance, they often suffer from coarse likelihood approximations and limited control over the estimation errors. 
More fundamentally, existing GRPO methods are not grounded in a unified MDP formulation, leaving the connection between iterative diffusion denoising for rollout generation and RL policy optimization largely heuristic and inconsistent. Moreover, most GRPO-based algorithms rely on reward signals only at the final completion, resulting in coarse credit assignment for the intermediate token-unmask decisions that shape the generation process. 
Although a recent work \citep{oba2026diffusion} introduced branching-based resampling at selected intermediate steps to provide intermediate feedback, 
the rewards are still being derived from the terminal evaluation,
and are potentially computation-expensive due to branching.

The purpose of this work is to tackle the aforementioned issues
by developing an RL framework for CTMC models,
which enables us to propose a principled approach for fine-tuning \textit{any} score-based discrete diffusion model via continuous-time reinforcement learning (CTRL).
To see the motivation, 
\cite{zhao2024scores, zhao2025score} and \cite{gao2025reward} first proposed to fine-tune diffusion models in continuous spaces via CTRL.
In that setting, 
the score function is the only unknown component in the generative process,
so one can treat it as a control/action/policy,
and hence, the reverse process is a state-controlled stochastic differential equation (SDE).
Then a score-related stochastic policy is designed for exploration,
and the corresponding policy gradient and estimation can be obtained by the 
well-developed stochastic control and RL theory in continuous time and space 
\citep{wang2020reinforcement,jia2022policy,jia2022policyevaluation,jia2023q,zhao2023policy,zhao2026policy,tang2024regret}.
Since the state dynamics of diffusion models are inherently formulated as continuous-time processes (either SDEs or CTMCs),
fine-tuning them via CRTL is a natural choice,
which not only exhibits robustness to time discretization \citep{zhao2025score},
but also allows us to carry out all clean theoretical derivations in continuous time,
while retain great flexibility for discretizing the continuous-time processes at the final algorithmic stage.

However, RL in continuous time and discrete spaces has yet been developed,
and it remains unclear whether controlled-based RL fine-tuning carries over to discrete diffusion models.
In this work, we address these problems affirmatively by formulating 
a CTRL framework for discrete state spaces with potentially arbitrary action spaces. 
On one hand, this framework opens a new gate for research at the intersection of control theory and RL in discrete spaces, where policy evaluation, policy gradient theory, and practical algorithms remain underexplored. 
On the other hand, we demonstrate its application to fine-tuning discrete diffusion models by treating the denoiser probability vector as the action, 
which offers flexibility for policy parameterization design.
This perspective naturally induces a controlled CTMC that explores the output distribution of the denoiser,
and yields a clean analytical expression for the policy log-likelihood. Consequently, it bridges the gap between autoregressive-style policy optimization objectives and diffusion-based generative modeling, providing a principled RL framework for fine-tuning discrete diffusion models. 

\subsection{Contributions}
\begin{enumerate}[leftmargin=*]
\item
{\em RL theory and algorithms}.
We first formally propose a CTRL framework in discrete spaces, which to our best knowledge, is novel. 
Analogous to classical stochastic control and RL in continuous time and space \citep{wang2020reinforcement,jia2023q,zhao2023policy}, 
we define controlled CTMC state dynamics and the (optimal) value function, and derive the corresponding analytical forms of Hamilton-Jacobi-Bellman (HJB) equation, optimal control, instantaneous advantage rate ($q$-function), and policy gradient. By utilizing these, we propose PPO and GRPO algorithms for general CTRL in discrete spaces.

\item 
{\em Fine-tuning discrete diffusion models}.
Inspired by the works \citep{zhao2025score,gao2025reward}, 
we view the sampling process of discrete diffusion models as a stochastic control problem, 
where the discrete score function is viewed as the control/action/policy. 
So the problem of fine-tuning discrete diffusion models naturally fits into the CTRL in discrete spaces, thereby solving it via the aforementioned RL algorithms.
To this end,
we propose several policy parameterization methods (Dirichlet policy, temperature softmax policy, and logistic normal policy), 
and derive the corresponding policy probability ratios. 
Our framework is flexible with any predefined process reward models that assign credits to intermediate partially denoised states. 
We also adopt trajectory subsampling to reduce the number of forward passes during training.

\item 
{\em Experiments}.
We conduct experiments on both low-dimensional synthetic data \citep{so2026discrete},
and LLaDA \citep{nie2026large}, an open-sourced 8B dLLM, for reasoning and coding tasks.
For the low-dimensional synthetic example, 
we illustrate that the PPO algorithm outperforms its alternatives (d1 \citep{zhao2025d} and DAM \citep{so2026discrete})
in both generation performance and convergence.
For LLaDA experiments, 
we train our CTRL on the GRPO algorithms,
demonstrating strong effectiveness in both mathematical reasoning tasks (Sudoku, GSM8K and MATH500),
and coding tasks (HumanEval and MBPP)
compared to the state of the arts algorithms -- d1 \citep{zhao2025d}, d2 \citep{wang2026d2} and SPG \citep{wang2026spg}.
For instance, we achieve $88.2\%$ accuracy
on Sudoku with 0-shot prompting training.

\end{enumerate}

\subsection{Related Works}

\textbf{Continuous-Time RL for Fine-Tuning Diffusion Models.}
\cite{zhao2024scores,zhao2025score} introduced a continuous-time RL pipeline to fine-tune continuous diffusion models, where the action is parameterized by Gaussian with mean as the score network, and proposed an actor-critic type PPO algorithm by leveraging $q$-function. \cite{gao2025reward} used a similar idea but enabled pretraining a diffusion model without knowing any prior of score function or learning the score. They theoretically proved that the optimal control is Gaussian distribution, and developed an actor–critic type $q$-learning algorithm to solve
the continuous-time RL problem.

\textbf{Fine-Tuning Discrete Diffusion Models via RL.} In the early stage, policy gradient methods for fine-tuning score-based discrete diffusion models were proposed to tackle the non-differentiability of categorical distribution inherent in discrete diffusion model sampling or non-differentiability of reward models when maximizing the entropy-regularized reward objective. DRAKES \citep{wang2025fine} made the originally non-differentiable trajectories differentiable using the Gumbel-Softmax trick for the general CTMC-based discrete diffusion models, but the reward function is required to be differentiable. Similar to our work, they also derived the optimal value function and HJB equation specified to CTMC-based diffusion model dynamics, 
while our derivation is for the more general controlled CTMC diffusion processes. Score Entropy Policy Optimization \citep{zekri2026fine} fine-tuned general discrete diffusion models over non-differentiable rewards by deriving importance sampling gradient. Later, the rising of mask-based dLLMs \citep{nie2026large,ye2025dream} drived a line of work focusing on enhancing their reasoning abilities via RL post-training. Due to fundamental modeling differences between the auto-regressive and diffusion generative paradigms for LLMs, dLLMs enable parallel or any-order decoding at the cost of intractable or computationally prohibitive trajectory likelihood. D1 \citep{zhao2025d} was the first to scale the reasoning ability of dLLMs with GRPO by using mean-field approximation to estimate the intractable policy log probability in a single forward pass. SPG \citep{wang2026spg} utilized the advantage-weighted GRPO framework and estimated the log likelihood by sandwiching its evidence lower bound (ELBO) and deriving an evidence upper bound (EUBO), where the evidence bounds are estimated via Monte Carlo sampling so multiple forward passes are needed. 
WD1 \citep{tang2026wd} adopted the probability ratio estimation from d1 but integrated the estimation of the old and reference policy log likelihoods into an exponential-weighted objective. UniGRPO \citep{yang2026mmada} estimated the log likelihoods via either a one-step unmasking from d1 or Monte Carlo estimation using the ELBO. DIFFPO \citep{zhao2025diffpo} proposed a two-times mean-field approximation by conditioning on one additional latent at a randomly sampled timestep at each optimization step, yielding a better surrogate policy and a loss contained two importance sampling probability ratios. D2 \citep{wang2026d2} approximated trajectory likelihood by products of tractable factors over blocks, reducing the number of forward passes from the number of denoising steps to the number of blocks. ESPO \citep{ou2026principled} modeled entire-sequence generation as a single action and uses the ELBO as a tractable approximation to the sequence-level likelihood. LFPO \citep{wei2026lfpo} overcame the likelihood intractability by directly optimizing denoising logits via contrastive positive/negative trajectories with significantly faster inference. In order to exploit the information of intermediate sequence, DISPO \citep{oba2026diffusion} proposed to branch from an intermediate sequence by resampling the currently masked positions from rollout-cached logits, scores the resulting completions by the reward model, and updates only the newly filled tokens. TraceRL \citep{wang2026revolutionizing} introduced a diffusion-based value model
that enhances training stability and accommodates the process rewards, providing stronger reward supervision. DTRPO \citep{zhang2026dtrpo} adapted direct policy optimization (DPO) \citep{rafailov2023direct} to dLLM policy optimization and proposed to estimate the trajectory log likelihood by leveraging subsampling denoising time steps and block-attention mask. Although this estimator can be implemented in a single forward pass, dTRPO is an offline RL method via constructed preference set; estimating the trajectory likelihood in online RL methods like GRPO in a single forward pass is still challenging. LLaDOU \citep{huang2026reinforcing} defined action at each step to be first determine the set of tokens to unmask and second predict the values of these tokens to obtain the next-time sequence, where some score is predicted to rank masked tokens under the current state, which can be viewed as a specific action policy in our framework.
Although JustGRPO \citep{ni2026the} indicated that any-order generation during RL training may limit the reasoning capabilities of dLLMs, we still focus on maintaining consistency between rollout generation and policy optimization to align with the continuous-time RL dynamics, as our fine-tuning framework is designed to be general rather than specific to dLLMs.

\medskip
\textbf{Organization of the paper}: The remainder of the paper is organized as follows. In Section \ref{sec2}, we formulate a general RL framework in continuous time and discrete spaces.
Section \ref{sec3} focuses on the application to fine-tuning discrete diffusion models, 
with the special case of masked diffusion models discussed in Section \ref{sec4}.
We report the experimental results in Section \ref{sec5}.
Concluding remarks are provided in Section \ref{sec6}.

%% file: section2.tex
\section{An RL Framework in Continuous Time and Discrete Spaces}
\label{sec2}
\subsection{Model Formulation} 
We formulate CTRL for CTMCs with discrete state spaces. Let $\cS$ be a discrete state space, and $\cA$ be an action space. 
With an abuse of notation, for continuous-time dynamics (and theoretical development), $t\in [0,T]$ where $T\in\R_+$ is a finite time horizon; 
while in the final algorithmic stage where the continuous time range is discretized, $t\in [T]$ denotes the time step with $T\in \N_+$ being the number of time discretization steps. 

We first introduce a controlled transition-rate map $R: [0,T]\times \cS \times \cA \to \R^{|\cS|}$, such that for each fixed $(t,x,a)\in [0,T]\times\cS\times\cA$, it holds that
\begin{equation}\label{rate}
R(t,x,a)_y\geq 0 \quad \text{for}\ y \neq x,\  y\in \cS, \quad \text{and}\quad R(t,x,a)_x=-\sum_{y\neq x} R(t,x,a)_y.
\end{equation}

If we ignore the control $a$ in \eqref{rate}, 
$R$ is the rate matrix of a time-inhomogeneous CTMC, with the $(x,y)$-th entry given by $R(t,x)_y$ at a fixed time $t\in[0,T]$ (see e.g., \citet[Chapter 2]{Liggett}). 
Thus, \eqref{rate} can be understood as an action-controlled rate matrix for any fixed $t$. For the realization of such an $R$, we give an example that is linear in $a$, encompassing a rich family of transition-rate maps.
\begin{example}[Rate with Linear Action]\label{example:rate}
Let the action space be $\cA \subset \R_+^d$. For any matrix-valued map $\Phi: [0,T]\times \cS \to\R^{|\cS|\times d}$ such that for any $(t,x)\in [0,T]\times \cS$, all columns of $\Phi(t,x)$ sum to zero: $$\sum_{y\in\cS}\Phi(t,x)_{y,i} = 0,\quad \forall i\in[d],$$
$R(t,x,a)=\Phi(t,x)\cdot a$ is a valid rate map if the off-diagonal rates are nonnegative on $\cA$: $R(t,x,a)_y\geq 0$ for all $y\neq x$ and $a\in\cA$, because it holds that
\begin{equation*}
\sum_{y\in\cS} R(t,x,a)_y
= \sum_{y\in\cS}\sum_{i=1}^{d}\Phi(t,x)_{y,i}\,a_i
= \sum_{i=1}^{d}a_i\sum_{y\in\cS}\Phi(t,x)_{y,i}
=0.
\end{equation*}
As will be clear in Proposition~\ref{prop:diffusion}, 
this linear action structure is satisfied by the discrete diffusion embedding.
\end{example}

Next, given an action sequence $\a=(a_t)_{t\in [0,T]}\in \cA^{[0,T]}$, 
we define the controlled state dynamics $(X_t^\a)_{t\in [0,T]}$ on $\cS$ by a CTMC with infinitesimal transition probability:
\begin{equation}\label{action}
p_{t+\Delta t\mid t}(y\mid x_t^\a)=\delta\{x_t^\a,y\}+R(t,x_t^\a,a_t)_y\cdot\Delta t+o(\Delta t), \quad y \in\cS, \ x_0^\a\sim \rho,
\end{equation}
where $\delta\{\cdot, \cdot\}$ denotes the Kronecker delta, $x_t^\a$ is a realization of $X_t^\a$, $a_t$ stands for the action/control at time $t$, and $\rho$ is the initial state distribution. The zero-sum condition in \eqref{rate} ensures that \eqref{action} is a valid infinitesimal transition probability, and \eqref{action} means that $R(t,x_t^\a,a_t)_y$ is the rate at which the probability mass moves from state $x_t^\a$ to $y$ under the control $a_t$. 

Commonly, we consider stochastic policy in RL by exploration to interact with and learn the
unknown environment through trial and error. 
At each time $t$ with the current state $x_t$, an action $a_t$
is generated or sampled from the distribution $\pi(\cdot\mid t,x_t)$. A policy $\pi(\cdot\mid t,x)$ is a probability distribution over $\cA$ given the current time-state pair $(t,x)$. Formally, $\pi(\cdot\mid\cdot,\cdot)$ is a function $\pi:[0,T]\times \cS\to \Delta(\cA)$.  
Denote by $(X_t^\pi)_{t\in [0,T]}$ the state dynamics governed by a feedback policy $\pi(\cdot\mid \cdot, \cdot)$,
which is a CTMC with infinitesimal transition probability:
\begin{equation}\label{pidynamics}
p_{t+\Delta t\mid t}(y\mid x_t^\pi)=\delta\{x_t^\pi,y\}+R(t,x_t^\pi,a_t^\pi)_y\cdot\Delta t+o(\Delta t), \quad y \in\cS, \ x_0^\pi\sim \rho,
\end{equation}
where $x_t^\pi$ is a realization of $X_t^\pi$ and $a_t^\pi\sim \pi(\cdot\mid t,x_t^\pi)$. 
To evaluate such a policy $\pi$, we consider an \textit{exploratory version} of state dynamics $(\tilde{X}_t^\pi)_{t\in [0,T]}$
in the same spirit as RL in continuous time and space \citep{wang2020reinforcement}.
Define its transition mechanism by
\begin{equation}\label{RLprocess}
p_{t+\Delta t\mid t}(y\mid \tilde{x}_t^\pi)=\delta\{\tilde{x}_t^\pi,y\}+\tilde{R}(t,\tilde{x}_t^\pi;\pi(\cdot\mid t,\tilde{x}_t^\pi))_y\cdot\Delta t+o(\Delta t), \quad y \in\cS, \ \tilde{x}_0^\pi\sim \rho,
\end{equation}
where $\tilde{x}_t^\pi$ is a realization of $\tilde{X}_t^\pi$ and $\tilde{R}(t,x;\pi(\cdot)):=\int_\cA R(t,x,a)\pi(a)\,\rmd a=\E_{a\sim \pi} R(t,x,a)$. Note that the external randomization of action $a$ is removed by taking the expectation, leaving a chain governed by the averaged transition rate
$\tilde{R}$. Importantly, we have the following proposition that allows us to perform theoretical analysis with the action-averaged exploratory state dynamics \eqref{RLprocess}, while observing real trajectories by simulating \eqref{pidynamics}. The proof is deferred to Appendix~\ref{app1-0}.

\begin{proposition}\label{prop:exploratory}
The CTMC $(X_t^\pi)_{t\in [0,T]}$ has the same distribution as the exploratory state dynamics $(\tilde{X}_t^\pi)_{t\in [0,T]}$ under another probability measure. 
\end{proposition}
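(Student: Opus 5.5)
Both processes start from $\rho$. Since each is specified only through its infinitesimal transition probabilities, the plan is to show that they have the same finite-dimensional distributions. For that it suffices to show that, seen as a process on $\cS$ alone, $X^\pi$ has the averaged generator $\tilde{R}(t,x;\pi(\cdot\mid t,x))$. The ``other probability measure'' is the one on the enlarged space where the action randomization lives. Concretely, take a probability space $(\Omega,\cF,\P)$ carrying the state chain $X^\pi$ together with the independent action draws $a_t^\pi\sim\pi(\cdot\mid t,X_t^\pi)$. Write $\tilde{\P}$ for the law on path space of the chain $\tilde X^\pi$ driven by the deterministic rate $\tilde R$. The claim is then that the pushforward of $\P$ under $\omega\mapsto (X_t^\pi(\omega))_{t}$ equals $\tilde{\P}$.

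The key step is a conditioning, or tower-property, computation on the transition kernel. Fix $t$ and a state $x$, and let $\cF_t$ be the state filtration. Because the action at time $t$ is drawn afresh from $\pi(\cdot\mid t,x)$ given $X_t^\pi=x$, and independently of the past beyond $x$, taking expectation of \eqref{pidynamics} over $a_t^\pi$ gives
\begin{equation*}
\P\big(X_{t+\Delta t}^\pi=y\mid \cF_t, X_t^\pi=x\big)=\delta\{x,y\}+\Big(\int_\cA R(t,x,a)\,\pi(a\mid t,x)\,\rmd a\Big)_y\Delta t+o(\Delta t).
\end{equation*}
Here the $o(\Delta t)$ term must be controlled uniformly in $a$. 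Assuming $\sup_{a}|R(t,x,a)_y|$ is integrable against $\pi$, or simply bounded, dominated convergence lets us exchange the limit $\Delta t\to0$ with the integral over $a$. The right-hand side is exactly \eqref{RLprocess}. Hence the marginal state process is Markov on $\cS$ with time-inhomogeneous generator $Q_t(x,y)=\tilde{R}(t,x;\pi(\cdot\mid t,x))_y$. The zero-row-sum and nonnegativity properties in \eqref{rate} pass to $Q_t$, because they are preserved under averaging.

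Next, I would invoke uniqueness for the Kolmogorov forward equation. For a CTMC whose rates are bounded (or at least non-explosive) and measurable in $t$, the transition function $P(s,t)$ is the unique solution of $\partial_t P(s,t)=P(s,t)Q_t$ with $P(s,s)=I$. Both $X^\pi$ under $\P$ and $\tilde X^\pi$ under $\tilde\P$ have generator $Q_t$ and initial law $\rho$. By the Markov property their finite-dimensional distributions therefore coincide:
\begin{equation*}
\P(X^\pi_{t_1}=x_1,\dots,X^\pi_{t_n}=x_n)=\sum_{x_0}\rho(x_0)\prod_{k}P(t_{k-1},t_k)_{x_{k-1}x_k}.
\end{equation*}
Equality of finite-dimensional distributions determines the law on the Skorokhod space of càdlàg paths, which completes the argument.

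I expect the main obstacle to be the conditioning step, namely justifying that the marginal state process is Markov with the averaged rate. The action is resampled continuously in time, so ``$a_t\sim\pi$ at each $t$'' is not rigorously a measurable process. I would handle this by working with a piecewise-constant action on a grid of mesh $\Delta t$ and passing to the limit, or by interpreting \eqref{pidynamics} in the sense of a jump-time construction. The latter works as follows. Given $X_t=x$, the total jump intensity is averaged over an action drawn at each candidate jump, and the post-jump state is chosen with probabilities $\E_a R(t,x,a)_y/\E_a(-R(t,x,a)_x)$. Thinning a Poisson clock with rate $\Lambda\ge\sup_a|R(t,x,a)_x|$ makes this explicit. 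Each potential jump samples a fresh $a\sim\pi(\cdot\mid t,x)$ and is accepted toward $y$ with probability $R(t,x,a)_y/\Lambda$. Averaging over $a$ then gives exactly acceptance probability $\tilde R_y/\Lambda$, which is the uniformization construction of $\tilde X^\pi$. Under the bounded-rate assumption this makes the identification in law transparent.
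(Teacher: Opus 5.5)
Your argument is correct. It shares its core step with the paper's proof, but it closes the argument with a different uniqueness tool.

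\textbf{Shared step.} Both proofs condition on the current state and integrate out the freshly drawn action. By linearity of the rate in the law of $a$, this gives the averaged rate $\tilde R(t,x;\pi(\cdot\mid t,x))$.

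\textbf{Where you diverge.} The difference is in upgrading this local identification to equality of path laws.
\begin{itemize}
\item The paper shows that the state process $X^\pi$ solves the martingale problem for the averaged generator $\tilde{\cL}$ with initial law $\rho$. It then invokes well-posedness of that martingale problem (Ethier--Kurtz) to get $\mathrm{Law}(X^\pi)=\mathrm{Law}(\tilde X^\pi)$ on $D([0,T],\cS)$. This route never has to prove separately that the state marginal of the pair $(X^\pi,a^\pi)$ is Markov; Markovianity comes out of uniqueness.
\item You assert the Markov property, then use uniqueness for the Kolmogorov forward equation and finite-dimensional distributions. This works for finite $\cS$ with bounded rates.
\end{itemize}

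\textbf{Completing your ``hence Markov'' step.} That step is itself an application of the same uniqueness result. You need to show that $s\mapsto \P(X_s^\pi=\cdot\mid\cF_t)$ solves $\partial_s\mu_s=\mu_s\tilde R(s)$ with $\mu_t=\delta_{X_t^\pi}$, where $\tilde R(s)$ is the averaged rate matrix. This follows from your conditional infinitesimal relation via the tower property. It then gives $\P(X_s^\pi=\cdot\mid\cF_t)=P(t,s)_{X_t^\pi,\cdot}$. Writing this out makes your argument complete, at about the same length as the paper's.

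\textbf{Uniformization.} Your thinning construction is the more valuable addition. It gives a genuine measurable realization of ``resampling the action at every time.'' The paper passes over this point by simply asserting that $a_s^\pi$ depends on the past only through $X_s^\pi$. The construction also makes equality in law nearly immediate: integrating out the fresh action at each tick of the rate-$\Lambda$ clock yields exactly the uniformized jump kernel of $\tilde R$.

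\textbf{Assumptions.} Both approaches rest on the same bounded-rate assumption. The paper uses it for dominated convergence. You use it for $\Lambda<\infty$ and for the remainder in the one-step expansion to be $o(\Delta t)$ uniformly in $a$. That uniformity is the condition you actually need, rather than integrability of $\sup_a|R(t,x,a)_y|$ against $\pi$.
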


\medskip
\textbf{Performance Metric.} The goal is to find the optimal feedback policy $\pi^*$ that maximize the expected reward:
\begin{equation}\label{RLobjective}
\pi^*=\argmax_{\pi}\E \left[\int_0^T r(t,X_t^\pi,a_t^\pi)\,\rmd t + h(X_T^\pi) \mid X_0^\pi\sim\rho \right],
\end{equation}
where $r: [0,T]\times \cS\times \cA\to \R$ is the running reward function, and $h:\cS\to\R$ is the terminal reward function. 
Also denote the value function of a policy $\pi$ by
\begin{align*}
V(t,x;\pi)={}&\E \left[\int_t^T r(s,X_s^\pi,a_s^\pi)\,\rmd s + h(X_T^\pi) \mid X_t^\pi=x \right]\\
={}&\E \left[\int_s^T \tilde{r}(s,\tilde{X}_s^\pi;\pi(\cdot\mid s,\tilde{X}_s^\pi))\,\rmd s + h(\tilde{X}_T^\pi) \mid \tilde{X}_t^\pi=x \right],
\end{align*}
where $\tilde{r}(t,x;\pi(\cdot)):=\int_\cA r(t,x,a)\pi(a)\,\rmd a=\E_{a\sim \pi} r(t,x,a)$ is the exploratory reward for any policy $\pi\in\Delta(\cA)$.

\subsection{Feynman-Kac Formula} 
Given $a\in\cA$, let $\cL^a$ be the infinitesimal generator associated with the process $(X_t^a)_{t\in [0,T]}$ defined by \eqref{action} (see e.g., \citet[Example 2]{benton2024denoising}):
$$
\cL^a f(t,x):=\frac{\partial f}{\partial t}(t,x)+\langle R(t,x,a), f(t,\cdot)\rangle,\quad f:[0,T]\times \cS\to \R.
$$
Here $\langle\cdot,\cdot\rangle$ denotes inner product between two vectors,
i.e., $\langle R(t,x,a), f(t,\cdot)\rangle=\sum_{y\in\cS} R(t,x,a)_y f(t,y)$.

With the action-aware infinitesimal generator in place, we now state the Feynman-Kac formula for discrete state spaces.
The proof is deferred to Appendix~\ref{app1-1}.
\begin{lemma}[Feynman-Kac Formula]\label{lemma:FK}
There exists $v$ satisfying
$$\E_{a\sim \pi(\cdot\mid t,x)}(\cL^a v(t,x)+r(t,x,a))=0, \quad v(T,x)=h(x), \forall x\in\cS.$$
Then $v$ is the value function: $v(t,x)=V(t,x;\pi), \forall (t,x)\in [0,T]\times \cS$. 
\end{lemma}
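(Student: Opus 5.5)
The plan is to reduce the statement to a standard Feynman--Kac argument for the exploratory chain $(\tilde X_t^\pi)$, and then transfer the result back to $(X_t^\pi)$ using Proposition~\ref{prop:exploratory}. First rewrite the hypothesis using linearity of $\cL^a$ in the rate vector. Because $\E_{a\sim\pi}\langle R(t,x,a),v(t,\cdot)\rangle=\langle \tilde R(t,x;\pi(\cdot\mid t,x)),v(t,\cdot)\rangle$, the condition becomes the backward Kolmogorov-type equation
\begin{equation*}
\frac{\partial v}{\partial t}(t,x)+\langle \tilde R(t,x;\pi(\cdot\mid t,x)),v(t,\cdot)\rangle+\tilde r(t,x;\pi(\cdot\mid t,x))=0,\qquad v(T,\cdot)=h .
\end{equation*}
This is a linear ODE system on $\R^{|\cS|}$ (finite $\cS$, or bounded rates in general). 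Existence of $v$ then follows from standard linear ODE theory, assuming $\tilde R$ and $\tilde r$ are measurable and bounded in $t$. I take this as the content of ``there exists'' and focus on the identification $v=V$.

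The main step is Dynkin's formula for the time-inhomogeneous CTMC $\tilde X^\pi$, whose generator is $\tilde\cL f=\partial_t f+\langle\tilde R,f(t,\cdot)\rangle$. I will show that
\begin{equation*}
M_s:=v(s,\tilde X_s^\pi)-v(t,x)-\int_t^s \tilde\cL v(u,\tilde X_u^\pi)\,\rmd u
\end{equation*}
is a martingale on $[t,T]$ under $\tilde X_t^\pi=x$. To do this, compute the conditional increment $\E[v(s+\Delta s,\tilde X_{s+\Delta s})-v(s,\tilde X_s)\mid\tilde X_s]$ from the infinitesimal transition probabilities \eqref{RLprocess}. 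A Taylor expansion in time gives $\tilde\cL v(s,\tilde X_s)\Delta s+o(\Delta s)$. Summing over a partition and letting the mesh go to zero yields the martingale property, with dominated convergence justified by boundedness of $v$, $\partial_t v$ and the rates. Taking expectations at $s=T$ and substituting $\tilde\cL v=-\tilde r$ gives
\begin{equation*}
v(t,x)=\E\Bigl[\int_t^T\tilde r(u,\tilde X_u^\pi;\pi(\cdot\mid u,\tilde X_u^\pi))\,\rmd u+h(\tilde X_T^\pi)\Bigm|\tilde X_t^\pi=x\Bigr].
\end{equation*}

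Finally, identify this expression with $V(t,x;\pi)$. Proposition~\ref{prop:exploratory} says the laws of $X^\pi$ and $\tilde X^\pi$ coincide, so $\E[h(X_T^\pi)\mid X_t^\pi=x]=\E[h(\tilde X_T^\pi)\mid \tilde X_t^\pi=x]$. For the running reward, condition on the state: $\E[r(u,X_u^\pi,a_u^\pi)\mid X_u^\pi]=\tilde r(u,X_u^\pi;\pi(\cdot\mid u,X_u^\pi))$, since $a_u^\pi\sim\pi(\cdot\mid u,X_u^\pi)$ is drawn independently given the state. Applying Fubini and then equality in law turns the running-reward term into the exploratory one, which is the second expression for $V$ displayed in the paper.

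I expect the main obstacle to be rigor in the Dynkin/martingale step. The issues are twofold: the time-inhomogeneity, and the requirement that the $o(\Delta t)$ remainders be uniform. If $\cS$ is countably infinite, one must additionally rule out explosion, which needs bounded total rates $\sup|\tilde R(t,x)_x|<\infty$. I would first try a uniformization argument or a direct forward-equation computation, $\frac{\rmd}{\rmd s}\E[v(s,\tilde X_s)]=\E[\tilde\cL v(s,\tilde X_s)]$, for finite $\cS$, and state the boundedness assumptions explicitly for the general case.
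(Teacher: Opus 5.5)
Your proposal is correct and follows the paper's route. Like the paper, you use linearity of $\cL^a$ in the rates to rewrite the hypothesis as $\tilde{\cL}v+\tilde r=0$ for the averaged generator, and then apply a Feynman--Kac/Dynkin argument to the CTMC with rates $\tilde R$. The paper does not re-derive the Dynkin step you prove by hand; it simply cites \citet[Theorem 5]{benton2024denoising}. It also treats the identification with $V(t,x;\pi)$ as built into the (second) expression in the definition of $V$, which you instead justify via Proposition~\ref{prop:exploratory} and conditioning on the state.
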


\begin{remark}
For any $(t,x,a)\in [0,T]\times \cS\times \cA$ and $f:[0,T]\times \cS\to \R$, let
$$
H(t,x,a,f):=\langle R(t,x,a), f(t,\cdot)\rangle+r(t,x,a).
$$
Then $H$ is the (generalized) Hamiltonian, an analogue to Hamiltonian in classical stochastic control. The optimal value function is given by
$$
V^*(t,x):=\sup_{\a=(a_s)_{s\in[t,T]}}\E\left[\int_t^T r(s,X_s^\a,a_s)\, \rmd s+h(X_T^\a) \mid X_t^\a=x \right],
$$
where $(t,x)\in [0,T]\times \cS$. 
The associated HJB equation for $V^*$ is:
$$
\frac{\partial v}{\partial t}(t,x)+\sup_{a\in\cA} H(t,x,a,v)=0,\quad v(T,x)=h(x), \ \forall x\in\cS,
$$
and the optimal feedback policy is
$a_t^*=a^*(t,x)=\argmax_{a\in\cA} H(t,x,a,V^*)$,
where $(t,x)$ is the current time-state pair.

In the standard control setting where the model is fully known (i.e., functional forms of $R, r$ and $h$ are specified), the opitmal policy can be derived at $t=T$ and will be carried out through $[0,T]$ by dynamic programming, so the learning and exploration are not needed. 
In the RL setting,
an additional entropy regularizer is usually adopted to encourage exploration (but for our purpose of fine-tuning diffusion models, this term is not needed):
\begin{align*}
V_\gamma(t,x;\pi):={}&\E \left[\int_t^T (r(s,X_s^\pi,a_s^\pi)-\gamma \log \pi(a_s^\pi)) \,\rmd s + h(X_T^\pi) \mid X_t^\pi=x \right].
\end{align*}
The value function $V_\gamma^*(t,x):= \sup_\pi V_\gamma(t,x;\pi)$ satisfies the exploratory HJB equation \citep{tang2022exploratory}:
$$
\frac{\partial v}{\partial t}(t,x)+\sup_{\pi} \E_{a\sim \pi}(H(t,x,a,v)-\gamma \log\pi(a))=0,\quad v(T,x)=h(x), \ \forall x\in\cS,
$$
and the optimal feedback policy is
$\pi^*_\gamma(a\mid t,x)\propto \exp\left(\frac{1}{\gamma}H(t,x,a,V_\gamma^*)\right)$.
\end{remark}

\subsection{Policy Gradient} 
In practice, the policy is parameterized by a function family $\{\pi^\theta(\cdot\mid\cdot.\cdot)\}_\theta$. We write $(X_t^\theta)_{t\in [0,T]}$ for the process $(X_t^{\pi^\theta})_{t\in [0,T]}$ governed by the policy $\pi^\theta$. Its transition probability is:
$$
p^\theta_{t+\Delta t\mid t}(y\mid x^\theta_t)=\delta\{x^\theta_t,y\}+\tilde{R}(t,x^\theta_t;\pi^\theta(\cdot\mid t,x^\theta_t))_y\cdot\Delta t+o(\Delta t), \quad y \in\cS, \ x_0^\theta\sim \rho,
$$
where $x_t^\theta$ is a realization of $X_t^\theta$,
and we denote by $p_t^\theta$ its distribution (a probability mass function). 
Also denote its corresponding value function by $V^\theta(t,x)=V(t,x;\pi^\theta(\cdot\mid t,x))$ and $V^\theta=\E_{x\sim\rho}V^\theta(0,x)$. The following theorem computes $\nabla_\theta V^\theta$, whose proof is given in Appendix~\ref{app1-2}.

\begin{theorem}[Policy Gradient]\label{lemma:PG}
We have
$$
\nabla_\theta V^\theta=\E \left[\int_0^T \nabla_\theta\log \pi^\theta(a_t^\theta\mid t,X_t^\theta)\cdot q(t,X_t^\theta,a_t^\theta;\pi^\theta)\,\mathrm{d}t  \right],
$$
where $q(t,x,a;\pi):=\cL^aV(t,x;\pi)+r(t,x,a)=\frac{\partial V}{\partial t}(t,x;\pi)+\langle R(t,x,a), V(t,\cdot;\pi)\rangle+r(t,x,a)=\frac{\partial V}{\partial t}(t,x;\pi)+H(t,x,a,V(\cdot,\cdot;\pi))$.
\end{theorem}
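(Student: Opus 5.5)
The plan is to differentiate the Feynman--Kac characterization of Lemma~\ref{lemma:FK} with respect to $\theta$ and then convert the resulting PDE identity back into an expectation along the exploratory chain. This is the same route used in continuous time and space (cf.\ \citet{jia2022policy}).

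\textbf{Step 1: differentiate the Feynman--Kac equation.} Write $V^\theta(t,x)$ for the value function. By Lemma~\ref{lemma:FK}, for every $x\in\cS$,
\begin{equation*}
\frac{\partial V^\theta}{\partial t}(t,x)+\int_\cA \bigl(\langle R(t,x,a),V^\theta(t,\cdot)\rangle+r(t,x,a)\bigr)\pi^\theta(a\mid t,x)\,\rmd a=0,\qquad V^\theta(T,x)=h(x).
\end{equation*}
Differentiate in $\theta$, assuming enough regularity to interchange $\nabla_\theta$ with $\partial_t$, the integral over $\cA$, and the sum over $\cS$; for finite $\cS$ and bounded $R,r$ this is a mild assumption. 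The derivative falls either on $V^\theta$ or on $\pi^\theta$. Setting $g(t,x):=\nabla_\theta V^\theta(t,x)$, this gives
\begin{equation*}
\frac{\partial g}{\partial t}(t,x)+\langle \tilde R(t,x;\pi^\theta(\cdot\mid t,x)),g(t,\cdot)\rangle+\int_\cA \bigl(\langle R(t,x,a),V^\theta(t,\cdot)\rangle+r(t,x,a)\bigr)\nabla_\theta\pi^\theta(a\mid t,x)\,\rmd a=0,
\end{equation*}
with $g(T,x)=0$ because $h$ does not depend on $\theta$.

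\textbf{Step 2: rewrite the source term via $q$.} Since $\int_\cA \nabla_\theta\pi^\theta\,\rmd a=0$, we may add $\partial_t V^\theta(t,x)$ inside the integral at no cost. Using the log-derivative trick, the source term then becomes
\begin{equation*}
\E_{a\sim\pi^\theta(\cdot\mid t,x)}\bigl[\nabla_\theta\log\pi^\theta(a\mid t,x)\,q(t,x,a;\pi^\theta)\bigr].
\end{equation*}
So $g$ solves a linear backward equation with the exploratory generator of $\tilde X^\theta$, zero terminal condition, and this running "reward".

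\textbf{Step 3: apply Feynman--Kac again.} Lemma~\ref{lemma:FK} applies to an action-independent reward and zero terminal reward; equivalently, it is the linear Kolmogorov backward representation for the chain with rate $\tilde R$. It gives
\begin{equation*}
g(t,x)=\E\Bigl[\int_t^T \E_{a\sim\pi^\theta}\bigl[\nabla_\theta\log\pi^\theta\, q\bigr](s,\tilde X_s^\theta)\,\rmd s\Bigm| \tilde X_t^\theta=x\Bigr].
\end{equation*}
Next, use Proposition~\ref{prop:exploratory}, that $X^\theta$ and $\tilde X^\theta$ share the same law, together with the fact that $a_s^\theta\sim\pi^\theta(\cdot\mid s,X_s^\theta)$ conditionally on $X_s^\theta$. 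This replaces the inner action average by the sampled action $a_s^\theta$. Averaging over $x\sim\rho$ at $t=0$ yields the stated formula.

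\textbf{Main obstacle.} The delicate point is the justification in Steps 1 and 3. We need differentiability of $\theta\mapsto V^\theta$ and the legality of the interchanges, plus uniqueness for the linear ODE system in $g$. Because $\cS$ is countable rather than finite, the backward equation is an infinite ODE system, and uniqueness is what lets us identify $g$ with the expectation. I would first handle finite $\cS$, where this is a linear ODE in $\R^{|\cS|}$ with bounded coefficients, so that well-posedness and differentiability follow from standard ODE parameter dependence. For countable $\cS$ I would then impose uniform bounds on the rates and integrable bounds on $\nabla_\theta\pi^\theta$ to apply dominated convergence. A secondary subtlety is the step from $\tilde X$ back to $(X,a)$, since the sampled action at a single time instant is only meaningful through this conditional expectation. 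I would state that step as the definition of the expectation in the theorem, as done in \citet{jia2022policy}.
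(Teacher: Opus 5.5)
Your proposal is correct and follows essentially the same route as the paper. You differentiate the Feynman--Kac identity of Lemma~\ref{lemma:FK} in $\theta$, use the log-derivative trick to turn the term hitting $\pi^\theta$ into the source $\E_{a\sim\pi^\theta}[\nabla_\theta\log\pi^\theta\, q]$, and then apply Lemma~\ref{lemma:FK} again to $\nabla_\theta V^\theta$ with zero terminal data. You are more explicit than the paper on two points it leaves implicit: why $\partial_t V^\theta$ can sit inside $q$ at no cost (since $\int_\cA\nabla_\theta\pi^\theta\,\rmd a=0$), and the passage from the exploratory chain back to sampled actions via Proposition~\ref{prop:exploratory}.
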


\begin{remark}
The $q(t,x,a;\pi)$ function defined in Theorem~\ref{lemma:PG} is the instantaneous advantage rate analogue to the continuous space counterpart \citep{jia2023q}. Following the presentation of \cite{jia2023q}, given $\pi$ and $(t,x,a)\in [0,T)\times \cS\times \cA$, 
consider a perturbed policy $\hat{\pi}$ of $\pi$, 
which takes the constant action $a\in\cA$ on $[t,t+\Delta t)$ for $\Delta t>0$ and then follows $\pi$ on $[t+\Delta t,T]$. Specifically, the corresponding state process $X^{\hat{\pi}}$, given $X_t^{\hat{\pi}}=x$, is broken into two pieces: on $[t,t+\Delta t)$, it is $X^a$ following the rate matrix \eqref{rate}, and on $[t+\Delta t,T]$ it is $X^\pi$ following the transition probability \eqref{pidynamics} with initial time-space pair $(t+\Delta t,X_{t+\Delta t}^a)$. The $q$-function measures the rate of the performance difference between the two policies $\pi$ and $\hat{\pi}$ when $t\to 0$, as shown in the following proposition whose proof is provided in Appendix~\ref{app1-3}.
\begin{proposition}\label{prop:qfunction}
Let ($\Delta t$-parametrized) $Q$-function, denoted by $Q_{\Delta t}(t,x,a;\pi)$, be the expected reward of the perturbed policy $\hat{\pi}$:
$$
Q_{\Delta t}(t,x,a;\pi):=\E\left[
\int_t^{t+\Delta t}r(s,X_s^a,a)\,\rmd s+\int_{t+\Delta t}^T r(s,X_s^\pi,a_s^\pi)\,\rmd s+h(X_T^\pi)\mid X_t^{\hat{\pi}}=x
\right].
$$
Then we have:
$$
q(t,x,a;\pi)=\lim_{\Delta t\to 0}\frac{Q_{\Delta t}(t,x,a;\pi)-V(t,x;\pi)}{\Delta t}.
$$
\end{proposition}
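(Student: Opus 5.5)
The plan is to split $Q_{\Delta t}$ at time $t+\Delta t$ using the Markov property of the perturbed process, and then expand the resulting expectation to first order in $\Delta t$ using the infinitesimal transition probabilities \eqref{action}.

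\emph{Conditioning step.} Conditioning on $\mathcal{F}_{t+\Delta t}$, the second and third terms in the definition of $Q_{\Delta t}$ equal $V(t+\Delta t, X^a_{t+\Delta t};\pi)$. This holds because on $[t+\Delta t,T]$ the process follows $\pi$ started from $(t+\Delta t, X^a_{t+\Delta t})$. Hence
\begin{equation*}
Q_{\Delta t}(t,x,a;\pi)=\E\left[\int_t^{t+\Delta t} r(s,X^a_s,a)\,\rmd s\mid X^a_t=x\right]+\E\left[V(t+\Delta t,X^a_{t+\Delta t};\pi)\mid X^a_t=x\right].
\end{equation*}

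\emph{First-order expansion.} I treat the two terms separately.
\begin{itemize}
\item The first term is $r(t,x,a)\Delta t+o(\Delta t)$, assuming $r$ is continuous in $s$. The chain leaves $x$ on $[t,t+\Delta t]$ with probability $O(\Delta t)$, and $r$ is bounded, so the off-path contribution is $O(\Delta t^2)$.
\item For the second term, I apply \eqref{action} with the constant control $a$:
\begin{equation*}
\E[V(t+\Delta t,X^a_{t+\Delta t};\pi)\mid X^a_t=x]=\sum_{y\in\cS}\bigl(\delta\{x,y\}+R(t,x,a)_y\Delta t+o(\Delta t)\bigr)V(t+\Delta t,y;\pi).
\end{equation*}
This equals $V(t+\Delta t,x;\pi)+\Delta t\,\langle R(t,x,a),V(t+\Delta t,\cdot;\pi)\rangle+o(\Delta t)$. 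Taylor expanding in time gives $V(t+\Delta t,x;\pi)=V(t,x;\pi)+\partial_t V(t,x;\pi)\Delta t+o(\Delta t)$. Continuity of $V$ in $t$ lets me replace $V(t+\Delta t,\cdot)$ by $V(t,\cdot)$ inside the inner product at cost $o(1)\cdot\Delta t$.
\end{itemize}

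\emph{Conclusion.} Subtracting $V(t,x;\pi)$, dividing by $\Delta t$, and letting $\Delta t\to 0$ gives
\begin{equation*}
\partial_t V(t,x;\pi)+\langle R(t,x,a),V(t,\cdot;\pi)\rangle+r(t,x,a)=\cL^a V(t,x;\pi)+r(t,x,a),
\end{equation*}
which is $q(t,x,a;\pi)$ as defined in Theorem~\ref{lemma:PG}.

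\emph{Main obstacle.} The delicate point is making the $o(\Delta t)$ terms uniform when $\cS$ is infinite (countable). Summing the $o(\Delta t)$ remainder against $V(t+\Delta t,\cdot)$ requires boundedness of $V$ and uniform boundedness of the jump rates, or finiteness of $\cS$. I would impose these standing regularity assumptions explicitly:
\begin{itemize}
\item bounded $r$ and $h$;
\item $\sup_{x}|R(t,x,a)_x|<\infty$;
\item continuity of $R$ and $r$ in $t$.
\end{itemize}
Under these, $V$ is bounded and $C^1$ in $t$; this follows from Lemma~\ref{lemma:FK}. With $V$ bounded and the rates uniformly bounded, the remainder terms are dominated and the expansion is rigorous. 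In the finite-$\cS$ case relevant to diffusion models, this is automatic.
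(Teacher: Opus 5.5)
Your proof is correct and follows essentially the same route as the paper. Both arguments condition at time $t+\Delta t$ to replace the tail reward by $V(t+\Delta t,X^a_{t+\Delta t};\pi)$ and then extract the first-order term $\cL^a V(t,x;\pi)+r(t,x,a)$. The only difference is that the paper gets the increment $V(t+\Delta t,X^a_{t+\Delta t};\pi)-V(t,x;\pi)$ from Dynkin's formula as $\int_t^{t+\Delta t}\cL^a V(s,X^a_s;\pi)\,\rmd s$, whereas you expand the one-step transition kernel and Taylor-expand $V$ in time directly, which is an equivalent computation.
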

By the zero-sum condition in \eqref{rate}, we get
\begin{align*}
q(t,x,a;\pi)=
{}&\frac{\partial V}{\partial t}(t,x;\pi)+\sum_{y\neq x}R(t,x,a)_y (V(t,y;\pi)-V(t,x;\pi))+r(t,x,a).
\end{align*}
This expression has an intuitive interpretation of the advantage rate. 
The first term is independent of the action and captures the temporal change of the value function. The second term is the value difference weighted by the transition rates, reflecting the space change of the value. In particular, if $V(t,y;\pi)-V(t,x;\pi)>0$,
so that transiting from $x$ to $y$ is advantageous, then a larger transition rate $R(t,x,a)_y$ leads to a higher $q$-value. The final term corresponds to the running reward and quantifies the immediate contribution of the time-state-action triple $(t,x,a)$.
\end{remark}

Based on the derived $q$-function, we present a PPO algorithm in Algorithm~\ref{alg:PPO}, where the value and policy networks are optimized iteratively. Note that we subtract the action-independent term $\partial V/\partial t$ when computing $q$-function $q_{b,t}$. We also present a GRPO algorithm that provides process supervision \citep{shao2024deepseekmath,wang2026process} in Algorithm~\ref{alg:GRPO},  eliminating the need for a value network by directly leveraging the intermediate (running) reward $r$ and the terminal reward $h$. 
While both algorithms employ a uniform time discretization by default, the continuous-time formulation allows for arbitrary discretization schemes. Consequently, more flexible time discretizations in the final algorithmic stage can be adopted for efficiency or accuracy.

\begin{algorithm}[!t]
\caption{Proximal Policy Optimization for Continuous-Time Discrete-Space RL}
\label{alg:PPO}
\begin{algorithmic}[1]
\Require  Initial policy parameter $\theta_0$ and value network parameter $\phi_0$, trajectory batch size $B$, number of time discretization steps $T$, step size $m$, clip parameter $\epsilon$.
\State $\theta\leftarrow \theta_0$, $\phi\leftarrow \phi_0$
\Repeat
\State $\theta_\mathrm{old}\leftarrow \theta$, $\phi_\mathrm{old}\leftarrow \phi$
\State Collect $B$ trajectories $\{\tau_b\}_{b=1}^B$, where $\tau_b=\{x_{b,t},a_{b,t},r_{b,t}\}_{t=0}^{T-1}\cup \{t_{b,T},x_{b,T}\}$ for $t\in [T-1]\cup\{0\}$ by running policy $\pi^{\theta_n}$
\State Compute rewards-to-go $\hat{r}_{b,t}=m\sum_{t'=t}^{T-1} r_{b,t'}+r_{b,T}$ for all $b\in [B]$ and $t\in [T-1]\cup\{0\}$
\State Perform multiple value network updates starting from $\phi_\mathrm{old}$:
$$
\phi\leftarrow \argmin_\phi \frac{1}{B}\sum_{b=1}^B \frac{1}{T}\sum_{t=0}^{T-1}|V^\phi(tm,x_{b,t})-\hat{r}_{b,t}|^2
$$

\State Compute $q_{b,t}=\sum_{y\neq x_{b,t}} R(tm,x_{b,t},a_{b,t})_y \cdot(V^{\phi_{n+1}}(tm,y)-V^{\phi_{n+1}}(tm,x_{b,t}))+r(tm,x_{b,t},a_{b,t})$ for all $b\in [B]$ and $t\in [T-1]\cup\{0\}$
\State Perform multiple policy updates starting from $\theta_\mathrm{old}$:
$$
\theta\leftarrow \argmax_{\theta} \frac{1}{B}\sum_{b=1}^B \frac{1}{T}\sum_{t=0}^{T-1} \min(\rho_{b,t}^\theta\cdot q_{b,t}, \text{clip}(\rho_{b,t}^\theta,1-\epsilon,1+\epsilon)\cdot q_{b,t}),
$$
where $\rho_{b,t}^\theta=\frac{\pi^\theta(a_{b,t}\mid tm,x_{b,t})}{\pi^{\theta_\mathrm{old}}(a_{b,t}\mid tm,x_{b,t})}$ 
\Until{Convergence}
\Ensure $\theta$
\Statex {\footnotesize \textbf{Note:}  Here for sampled trajectories, $x_{b,0}\sim \rho$, $a_{b,t}\sim \pi^{\theta_\mathrm{old}}(\cdot\mid tm,x_{b,t})$, $x_{b,t+1}\sim \mathrm{Cat}(\cdot;e_{x_{b,t}}+R(tm,x_{b,t},a_{b,t})\cdot m)$, $r_{b,t}=r(tm,x_{b,t},a_{b,t})$ for all $b,t$, and $r_{b,T}=h(x_{b,T})$.}
\end{algorithmic}
\end{algorithm}

\begin{algorithm}[!t]
\caption{Group Relative Policy Optimization for Continuous-Time Discrete-Space RL}
\label{alg:GRPO}
\begin{algorithmic}[1]
\Require  Initial policy parameter $\theta_0$, trajectory batch size $B$, trajectory group size $G$, number of time discretization steps $T$, step size $m$, clip parameter $\epsilon$.
\State $\theta\leftarrow \theta_0$
\Repeat
\State $\theta_\mathrm{old}\leftarrow \theta$ 
\State Sample $B$ initial states $\{x_b\}_{b=1}^B \sim  \rho$
\State For each $x_b$, collect $G$ trajectories, $\{\tau_{b,g}\}_{g=1}^G$, starting from $x_b$, where $\tau_{b,g}=\{x_{{b,g,t}},a_{{b,g,t}},r_{{b,g,t}}\}_{t=0}^{T-1}\cup\{t_T,x_{b,g,T},r_{b,g,T}\}$ by running policy $\pi^{\theta_\mathrm{old}}$
\State Compute group mean $\mu_b=\frac{1}{GT}\sum_{g=1}^G\sum_{t=0}^{T-1} r_{b,g,t}$ and $\nu_b=\frac{1}{G} \sum_{g=1}^G r_{b,g,T}$, group standard deviation $\sigma_b=\sqrt{\frac{1}{GT}\sum_{g=1}^G\sum_{t=0}^{T-1}  (r_{b,g,t}-\mu_b)^2}$ and $\omega_b=\sqrt{\frac{1}{G}\sum_{g=1}^G (r_{b,g,T}-\nu_b)^2}$, normalized reward $\hat{r}_{b,g,t}=\frac{r_{b,g,t}-\mu_b}{\sigma_b}$ and $\hat{r}_{b,g,T}=\frac{r_{b,g,T}-\nu_b}{\omega_b}$, and advantage $A_{b,g,t}=\frac{1}{T}\sum_{t'=t}^{T-1} \hat{r}_{b,g,t'}+\hat{r}_{b,g,T}$
\State Perform multiple policy updates starting from $\theta_\mathrm{old}$:
$$
\begin{aligned}
\theta\leftarrow \argmax_{\theta} \frac{1}{B}\sum_{b=1}^B \frac{1}{G}\sum_{g=1}^G \frac{1}{T}\sum_{t=0}^{T-1}
\min(\rho_{b,g,t}^\theta A_{b,g,t}, \text{clip}(\rho_{b,g,t}^\theta,1-\epsilon,1+\epsilon) A_{b,g,t}),
\end{aligned}
$$
where $\rho_{b,g,t}^\theta=\frac{\pi^\theta(a_{b,g,t}\mid tm, x_{b,g,t})}{\pi^{\theta_\mathrm{old}}(a_{b,g,t}\mid tm, x_{b,g,t})}$
\Until{Convergence}
\Ensure $\theta$
\Statex {\footnotesize \textbf{Note:} For sampled trajectories, $x_{b,g,0}=x_b$, $a_{b,g,t}\sim \pi^{\theta_\mathrm{old}}(\cdot\mid tm,x_{b,g,t})$, $x_{b,g,t+1}\sim \mathrm{Cat}(\cdot;e_{x_{b,g,t}}+R(tm,x_{b,g,t},a_{b,g,t})\cdot m)$, $r_{b,g,t}=r(tm,x_{b,g,t},a_{b,g,t})$, and $r_{b,g,T}=h(x_{b,g,T})$} for all $b\in [B],g\in [G],t\in [T-1]\cup\{0\}$.
\end{algorithmic}
\end{algorithm}   

%% file: section3.tex
\section{A General Framework for Fine-Tuning Score-Based Discrete Diffusion Models}\label{sec3}
\subsection{Discrete Diffusion Models} 
In score-based discrete diffusion models \citep{lou2023discrete,sun2023scorebased,campbell2022continuous}, the forward and reverse processes are formulated as CTMCs. Let $Q_t\in\R^{|\cS|\times |\cS|}$ be the transition rate matrix of the forward process $(Y_t)_{t\in [0,T]}$ at time $t\in [0,T]$ such that $\mathrm{Law}(Y_0)=p_\mathrm{data}$ and $\mathrm{Law}(Y_T)\approx p_\mathrm{noise}$.
The reverse process $(X_t)_{t\in [0,T]}$ is the exact time reversal of $(Y_t)_{t\in [0,T]}$ \citep{kelly2011reversibility}, that is, $X_t \overset{\text{d}}{=} Y_{T-t}$, with the rate matrix $Q_t^\leftarrow$ given by
$$
Q_t^\leftarrow (x,y)=Q_{T-t}(y,x)\cdot s(T-t, x)_y\quad \text{for}\ y\neq x,\ y\in \cS,\ \text{and}\ Q_t^\leftarrow(x,x)=-\sum_{y\neq x}Q_t^\leftarrow(x,y),\ \forall x\in\cS.
$$
Here $s(t,x):=(q_t(y)/q_t(x))_{y\neq x}\in \R_+^{|\cS|-1}$ is called \textit{discrete score function} \citep{lou2023discrete,meng2022concrete}, which is a collection of probability ratios of forward marginal distributions. Here, $q_t$ is the forward marginal at time $t$, i.e., the distribution (a probability mass function) of $Y_t$. 

In practice, the score $s(t,x)$ is parameterized by a function family $\{s_\theta(t,x)\}_\theta$,
and is then learned by score matching methods, such as denoising score entropy \citep{lou2023discrete,benton2024denoising}. Once we have a pretrained estimator $s_{\theta_{\mathrm{pre}}}(\cdot,\cdot)$, we can perform generative modeling by discretizing the continuous-time reverse process:
\begin{equation}\label{sampling}
p^{\theta_\mathrm{pre}}_{t+\Delta t\mid t}(y\mid x_t)=\delta\{x_t,y\}+Q_{T-t}(y,x_t)s_{\theta_\mathrm{pre}}(T-t,x_t)_y\cdot  \Delta t+o(\Delta t),\quad y\neq x_t,\ y\in\cS,\ x_0\sim p_\mathrm{noise},
\end{equation}
where $x_t$ is a realization of $X_t$ and  $p_t^{\theta_\mathrm{pre}}$ is the marginal distribution of the pretraining sampling process with score $s_{\theta_{\mathrm{pre}}}(\cdot,\cdot)$ at time $t$.

\subsection{Score as Action} 
To draw connections between CTRL and discrete diffusion models, 
we compare the RL process~\eqref{RLprocess} with the diffusion sampling process \eqref{sampling}. Given the forward diffusion rate matrix $(Q_t)_{t\in [0,T]}$, we choose the transition-rate map $R$ in the RL dynamics \eqref{RLprocess} as
\begin{equation}\label{diffusionrate}
R(t,x,a)_y=Q_{T-t}(y,x)\cdot a_y\quad \text{for}\ y\neq x,\ y\in\cS,
\end{equation}
for any $(t,x,a)\in[0,T]\times \cS\times \cA$. 
Here the action space is specified as $\cA=\R_+^{|\cS|-1}$, 
because we are only concerned with the off-diagnal entries. The following proposition shows that the rate map \eqref{diffusionrate} belongs to the family of rate maps in Example~\ref{example:rate}. 
We give the proof in Appendix~\ref{app:prop3}.

\begin{proposition}\label{prop:diffusion}
Given any $(t,x)\in [0,T]\times \cS$, index the action by the off-diagonal
targets, $a=(a_y)_{y\neq x}\in\R_+^{|\cS|-1}$ (so $d=|\cS|-1$ in Example~\ref{example:rate}), and set
\[
\Phi(t,x)_{y,i} :=
\begin{cases}
Q_{T-t}(i,x), & y = i ,\\[2pt]
-\,Q_{T-t}(i,x), & y = x,\\[2pt]
0, & \text{otherwise},
\end{cases}
\qquad \text{for all}\ y\in \cS \ \text{and}\  i \in\cS: i\neq x.
\]
Then each column of $\Phi(t,x)$ sums to zero, and the rate map \eqref{diffusionrate} satisfies $R(t,x,a)=\Phi(t,x)\cdot a$ for all $(t,x,a)\in [0,T]\times \cS\times \cA$.
\end{proposition}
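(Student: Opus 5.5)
The proof is a direct computation in two parts: first the zero-column-sum property of $\Phi(t,x)$, then the identity $R(t,x,a)=\Phi(t,x)\cdot a$ checked entrywise. Throughout, fix $(t,x)\in[0,T]\times\cS$. Columns of $\Phi(t,x)$ are indexed by $i\in\cS\setminus\{x\}$ and rows by $y\in\cS$.

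\textbf{Column sums.} Fix a column $i\neq x$. By definition, $\Phi(t,x)_{y,i}$ is nonzero only for $y=i$ and $y=x$, and these two rows are distinct because $i\neq x$. Hence
\[
\sum_{y\in\cS}\Phi(t,x)_{y,i}=Q_{T-t}(i,x)-Q_{T-t}(i,x)=0.
\]
This is exactly the hypothesis required in Example~\ref{example:rate}.

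\textbf{Product identity, off-diagonal entries.} For $y\neq x$, only the column $i=y$ has a nonzero entry in row $y$. Therefore
\[
(\Phi(t,x)\cdot a)_y=\sum_{i\neq x}\Phi(t,x)_{y,i}\,a_i=Q_{T-t}(y,x)\,a_y,
\]
which matches the off-diagonal rates in \eqref{diffusionrate}.

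\textbf{Product identity, diagonal entry.} Every column contributes to row $x$, so
\[
(\Phi(t,x)\cdot a)_x=-\sum_{i\neq x}Q_{T-t}(i,x)\,a_i=-\sum_{y\neq x}R(t,x,a)_y=R(t,x,a)_x.
\]
The last equality is the diagonal convention in \eqref{rate}; this is the sense in which \eqref{diffusionrate}, which specifies only off-diagonal entries, defines a full rate vector.

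\textbf{Validity.} Finally, note that the off-diagonal entries $Q_{T-t}(y,x)\,a_y$ are nonnegative. This holds because off-diagonal entries of a rate matrix are nonnegative and $a\in\R_+^{|\cS|-1}$. So the nonnegativity condition of Example~\ref{example:rate} is met, and $R$ is a valid rate map.

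There is no real obstacle here. The one point needing care is the index bookkeeping: the rows $i$ and $x$ of column $i$ must be kept distinct, and the diagonal of $R$ must be understood as defined implicitly by the zero-sum convention in \eqref{rate}.
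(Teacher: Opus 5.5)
Your proposal is correct and follows essentially the same entrywise computation as the paper's proof: the column-sum check using the two distinct nonzero rows $i$ and $x$, the off-diagonal row picking out only column $i=y$, and the diagonal row recovering the zero-sum convention in \eqref{rate}. Your added nonnegativity remark, which checks the validity condition of Example~\ref{example:rate}, is a harmless extra that the paper leaves implicit.
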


Suppose that we have access to a pretrained score function $s_{\theta_\mathrm{pre}}(\cdot,\cdot)$, and hence a pretrained model. 
Let $\pi^{\theta_\mathrm{pre}}(\cdot\mid t,x)$ be a deterministic policy $\delta_{s_{\theta_\mathrm{pre}}(T-t,x)}$; that is, if $a_t^{\theta_\mathrm{pre}}\sim \pi^{\theta_\mathrm{pre}}(\cdot\mid t,x_t)$, then $a_t^{\theta_\mathrm{pre}}=s_{\theta_\mathrm{pre}}(T-t,x_t)$. 
So the reverse CTMC \eqref{sampling} becomes
$$
p^{\theta_\mathrm{pre}}_{t+\Delta t\mid t}(y\mid x_t)=\delta\{x_t,y\}+R(t,x_t,a_t^{\theta_\mathrm{pre}})_y\cdot \Delta t+o(\Delta t)\quad \text{for} \ y\in\cS.
$$
As a result, analogous to \citet{zhao2025score}, the score function is replaced by the action, and the problem of finding the optimal score becomes finding the optimal action/policy, which can be tackled by policy optimization. 

\subsection{Post-Training} Generally, for a parameterized policy $\pi^\theta(\cdot\mid \cdot,\cdot)$, 
we denote the control $a_t^\theta\sim \pi^\theta(\cdot\mid t,X_t^\theta)$, where $X_t^\theta$ is short for the reverse process $X_t^{\pi^\theta}$ driven by $\pi^\theta$:
$$
p^{\theta}_{t+\Delta t\mid t}(y\mid x^\theta_t)=\delta\{x^\theta_t,y\}+R(t,x^\theta_t,a_t^{\theta})_y\cdot \Delta t+o(\Delta t)\quad \text{for} \ y\in\cS, \ x_0^\theta\sim p_\mathrm{noise},
$$
where $x_t^\theta$ is a realization of $X_t^\theta$ and $p_t^\theta$ is the distribution of $X_t^\theta$. 
The post-training objective is:
\begin{equation}\label{ptobjective}
\max_\theta \E\left[\mathrm{TRF}(X_T^\theta)+\int_0^T \mathrm{IRF}(X_t^\theta)\,\mathrm{d}t-\beta D_\mathrm{KL}(\P^\theta\,\Vert\, \P^{\theta_\mathrm{pre}})  \right].
\end{equation}
Here $\mathrm{TRF}(\cdot)$ denotes the terminal reward function for the final output $X_T^\theta$, $\mathrm{IRF}(\cdot)$ denotes the intermediate reward function (possibly time-dependent) for the intermediate state $X_t^\theta$, $\P^\theta$ and $\P^{\theta_\mathrm{pre}}$ are the path measures of the processes $(X_t^\theta)_{t\in [0,T]}$ and $(X_t^{\theta_\mathrm{pre}})_{t\in [0,T]}$, respectively, and $\beta>0$ is a penalty hyperparameter. 
The following theorem connects the post-training objective \eqref{ptobjective} with the RL objective \eqref{RLobjective}.
The proof will be given in Appendix~\ref{app1-4}.
\begin{theorem}\label{KLdivergence}
The KL divergence between $\P^\theta$ and $\P^{\theta_\mathrm{pre}}$ can be expressed as:
\begin{align*}
 D_\mathrm{KL}(\P^\theta\,\Vert\, \P^{\theta_\mathrm{pre}})={} &\E_{X_t^\theta\sim p_t^\theta} \int_0^T \sum_{y\neq X_t^\theta} Q_{T-t}(y,X_t^\theta)D_I\left(\E_{a_t^\theta\sim \pi^\theta(\cdot\mid t,X_t^\theta)}(a_t^\theta)_y \,\Vert\, s_{\theta_\mathrm{pre}}(T-t,X_t^\theta)_y\right)\,\rmd t,
\end{align*}
where $D_I(\cdot\,\Vert\,\cdot)$ is the generalized I-divergence \citep{amari2012differential} given by $D_I(x\,\Vert\,y)=-x+y+x\log(x/y)$.
\end{theorem}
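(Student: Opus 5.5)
The plan is to use Proposition~\ref{prop:exploratory} to replace the action-randomized chain $(X_t^\theta)$ by the exploratory chain with averaged rates. Under the diffusion rate map \eqref{diffusionrate}, its off-diagonal rates are
\[
\tilde R(t,x;\pi^\theta(\cdot\mid t,x))_y = Q_{T-t}(y,x)\,\bar a^\theta_y(t,x),\qquad \bar a^\theta_y(t,x):=\E_{a\sim\pi^\theta(\cdot\mid t,x)} a_y .
\]
This uses the linearity of $R$ in $a$, i.e.\ Example~\ref{example:rate} together with Proposition~\ref{prop:diffusion}. So $\P^\theta$ is the path law of a time-inhomogeneous CTMC with deterministic feedback rates $\lambda^\theta(t,x,y)=Q_{T-t}(y,x)\bar a^\theta_y(t,x)$. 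The pretrained path law $\P^{\theta_\mathrm{pre}}$ is the law of a CTMC with rates $\lambda^{\mathrm{pre}}(t,x,y)=Q_{T-t}(y,x)s_{\theta_\mathrm{pre}}(T-t,x)_y$. Both chains start from the same $p_\mathrm{noise}$, so the initial-law term of the KL divergence vanishes.

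Next I will write the Radon--Nikodym derivative between two jump processes on a finite state space with the same initial law. This is the Girsanov formula for CTMCs, which can be checked directly from the density of a path with jump times $t_1<\dots<t_n$ and states $x_0,\dots,x_n$:
\[
\log\frac{\rmd\P^\theta}{\rmd\P^{\theta_\mathrm{pre}}}=\sum_{k}\log\frac{\lambda^\theta(t_k,x_{k-1},x_k)}{\lambda^{\mathrm{pre}}(t_k,x_{k-1},x_k)}-\int_0^T\sum_{y\neq X_t}\bigl(\lambda^\theta(t,X_t,y)-\lambda^{\mathrm{pre}}(t,X_t,y)\bigr)\rmd t .
\]
Taking the expectation under $\P^\theta$, I will convert the jump sum with the compensator identity $\E\sum_k f(t_k,X_{t_k-},X_{t_k})=\E\int_0^T\sum_{y\neq X_t}\lambda^\theta(t,X_t,y)f(t,X_t,y)\,\rmd t$. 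This is the Lévy system / martingale property of the counting measure minus its compensator. Combining the two integrals gives
\[
\E\int_0^T\sum_{y\neq X_t}\Bigl[\lambda^\theta\log\frac{\lambda^\theta}{\lambda^{\mathrm{pre}}}-\lambda^\theta+\lambda^{\mathrm{pre}}\Bigr]\rmd t .
\]
Because both rates share the factor $Q_{T-t}(y,x)\ge 0$, the bracket factors as $Q_{T-t}(y,X_t)\,D_I\bigl(\bar a^\theta_y\,\Vert\, s_{\theta_\mathrm{pre}}(T-t,X_t)_y\bigr)$. Fubini then turns the expectation over paths into an expectation over the marginal $X_t^\theta\sim p_t^\theta$ at each $t$, which is the claimed formula.

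The main obstacle is justifying the path-space density and the compensator identity rigorously. One issue is integrability: since $|\cS|$ is finite and the rates are bounded on $[0,T]$, jump counts have exponential moments. A second issue is absolute continuity: wherever $\lambda^{\mathrm{pre}}=0$ and $\lambda^\theta>0$, the KL divergence is infinite, consistent with $D_I(x\Vert 0)=\infty$, and the convention $0\log 0=0$ handles the reverse case. If a direct Girsanov citation is unwelcome, I would instead derive the formula by time discretization. Discretize $[0,T]$ with mesh $\Delta t$, use the chain rule for KL, and note that the one-step KL between the categorical laws $\delta\{x,\cdot\}+\lambda\Delta t$ and $\delta\{x,\cdot\}+\lambda'\Delta t$ equals $\Delta t\sum_{y\neq x}(\lambda\log(\lambda/\lambda')-\lambda+\lambda')+o(\Delta t)$. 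Letting $\Delta t\to 0$ then recovers the formula.
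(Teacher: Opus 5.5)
Your proposal is correct and follows the paper's route. The paper likewise identifies $\P^\theta$ with the CTMC whose rates are the averaged rates $Q_{T-t}(y,x)\E_{a\sim\pi^\theta(\cdot\mid t,x)}a_y$, and applies the CTMC Girsanov formula (citing Lemma~1 of \cite{zhang2025convergence}) against the pretrained rates $Q_{T-t}(y,x)s_{\theta_\mathrm{pre}}(T-t,x)_y$; your compensator computation and time-discretization fallback just unpack what the paper takes from that citation.
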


By Theorem~\ref{KLdivergence}, the objective in \eqref{ptobjective} is
\begin{equation}\label{ELBO}
\E_{X_t^\theta\sim p_t^\theta}\left[\underbrace{\mathrm{TRF}(X_T^\theta)}_{h(X_T^\theta)}  + \int_0^T \left(\underbrace{\mathrm{IRF}(X_t^\theta)-\beta \sum_{y\neq X_t^\theta} Q_{T-t}(y,X_t^\theta)D_I(\E_{a_t^\theta\sim \pi^\theta(\cdot\mid t,X_t^\theta)}(a_t^\theta)_y \,\Vert\, s_{\theta_\mathrm{pre}}(T-t,X_t^\theta)_y)}_{\tilde{r}(t,X_t^\theta;\pi^\theta(\cdot\mid t,X_t^\theta))}\right)\rmd t \right].
\end{equation}
The objective \eqref{ELBO} coincides with the RL objective \eqref{RLobjective}. With this expression, for any $(t,x)\in [0,T]\times \cS$, we define the value function

\begin{align}
V^\theta(t,x)
={}&
\E_{X^\theta_s\sim p_s^\theta}
\Bigg[
\mathrm{TRF}(X_T^\theta)
+
\int_t^T
\Bigg(
\mathrm{IRF}(X_s^\theta)
\notag\\
&-\beta
\sum_{y\neq X_s^\theta}
Q_{T-s}(y,X_s^\theta)
D_I\!\left(
\E_{a_s^\theta\sim \pi^\theta(\cdot\,\mid\, s,X_s^\theta)}
(a_s^\theta)_y
\,\middle\Vert\,
s_{\theta_{\mathrm{pre}}}(T-s,X_s^\theta)_y
\right)
\Bigg)
\,\mathrm ds
\,\Bigg|\,
X_t^\theta=x
\Bigg].
\label{valuefn}
\end{align}

To design the fine-tuning algorithms, it suffices to inject the functions $R$, $r$, and $h$ in \eqref{diffusionrate} and \eqref{valuefn} into Algorithms~\ref{alg:PPO} and \ref{alg:GRPO}, and replace the initial distribution $\rho$ with $p_\mathrm{noise}$ and the initial parameter $\theta_0$ with $\theta_\mathrm{pre}$. Notably, our framework naturally accommodates arbitrary intermediate rewards,
does not require the reward function to be differentiable,
and avoids the non-differentiability issue of sampling from a categorical distribution when maximizing the reward (see \citet[Section~4.2]{wang2025fine}).

%% file: section4.tex
\section{Fine-Tuning Masked Diffusion Models}
\label{sec4}
Although our framework applies to general high-dimensional CTMC-based diffusion models as discussed in Section~\ref{sec3}, 
our focus will be on the widely adopted masked diffusion models (MDMs). 
In this setting, 
the score function has a direct correspondence with the denoiser, allowing the policy to be naturally parameterized as a probability distribution over the simplex.

\textbf{Notations.}
We use lowercase letters to denote scalars, and boldface lowercase letters to denote vectors. The $i$-th entry of a vector $\x$ is denoted by $\x^i$, or simply $x^i$ when the context is clear. We use $\x^{\backslash i}$ to refer to all dimensions of $\x$ except the $i$-th, and $\x^{\backslash i} \odot \hat{x}^i$ to denote a vector whose $i$-th dimension takes the value $\hat{x}^i$, while the other dimensions remain as $\x^{\backslash i}$. 
For a positive integer $n$, we denote
$\mathbf{1}_n \in \mathbb{R}^n$ as the column vector of ones, and $I_n \in \mathbb{R}^{n \times n}$ as the identity matrix. The notation $\e_j$ refers to a one-hot column vector with a $1$ in the $j$-th position. 
We use $\mathrm{Cat}(\cdot;\p)$ for a categorical distribution over a one-hot column vector with probabilities given by the column vector $\p$. $\mathbf{1}\{\cdot\}$ is the indicator function.

\subsection{Preliminaries on Masked Diffusion Models} Let $\cS=(\cV\cup\{\mathtt{m}\})^L$ be the state space, where $\cV:=\{1,\cdots,V\}$ is the vocabulary space, $\mathtt{m}$ is the special ``mask'' token, and $L$ is the sequence length. The continuous time horizon is set to $1$. We use superscript $i \in [L]$ for the sequence index, subscript $j \in \cV$ for the token vocabulary index, subscript $t$ for the current time (continuous-time, $t\in [0,1]$) or time step (discrete-time, $t\in [T]$), and subscript $g\in [G]$ for a specific sample trajectory.

\textbf{Forward Process.} The forward process is factorized as $p_{t|0}(\x_t\mid \x_0)=\prod_{i=1}^L p_{t|0}(x_t^i\mid x_0^i)$, where
$$
p_{t|0}(x_t^i\mid x_0^i)=\mathrm{Cat}(x_t^i;\alpha_t \e_{x_0^i}+(1-\alpha_t)\e_\mathtt{m}),\quad t\in[0,1].
$$
$\alpha_t$ is the noise scheduled to satisfy $\alpha_0\approx 1$ and $\alpha_1\approx 0$. A common choice is $\alpha_t=1-t$ \citep{nie2026large}, which we adopt throughout this work. The forward process interpolates the clean data and pure mask noise. The resulting forward rate matrix for each dimension is 
$$
Q_t^\mathrm{tok}=-\frac{1}{\alpha_t}\frac{\partial \alpha_t}{\partial t}(\mathbf{1}_{V+1}\cdot \e_\mathtt{m}^\top-I_{V+1})\in\R^{(V+1)\times (V+1)},
$$
and the full-dimensional rate matrix $Q_t=\oplus_{i=1}^L Q_t^\mathrm{tok} \in \R^{(V+1)^L\times (V+1)^L}$ is the Kronecker sum of $L$ dimension-wise rate matrices, with non-diagnal entries given by
$$
Q_t(\x,\x^{\backslash i}\odot \mathtt{m})=-\frac{1}{\alpha_t}\frac{\partial \alpha_t}{\partial t}=\frac{1}{1-t},\quad x^i\neq \mathtt{m}.
$$

\textbf{Reverse Process.} Given $\x_0\sim p_\mathrm{data}$, for $0\leq s\leq t$, the reverse transition probability is
$$
p_{s|t,0}(x_s^i\mid \x_t,\x_0)=\begin{cases}
    \mathrm{Cat}(x_s^i;\e_{x_t^i}), & x_t^i\neq \mathtt{m};\\
    \mathrm{Cat}(x_s^i;\frac{1-\alpha_s}{1-\alpha_t}\e_\mathtt{m}+\frac{\alpha_s-\alpha_t}{1-\alpha_t}\e_{x_0^i}),& x_t^i=\mathtt{m}.
\end{cases}
$$
Note that conditional on $\x_0$, if $x_t^i=\mathtt{m}$, then with probability $\frac{\alpha_s-\alpha_t}{1-\alpha_t}$, $x_t^i$ will jump to $x_0^i$ at time $s$; once $x_t^i$ is unmasked, it remains the same until $t=0$.
Thus, one can parameterize a denoiser
$\p_\theta(\x_t)$, which takes all the dimensions of $\x_t$ as the input and predicts masked tokens ($x_t^i=\mathtt{m},\ i\in [L]$) simultaneously. To be more specific, for any $i\in [L]$, a neural network $\p_\theta^{(i)}(\x_t)$, the $i$-th output of $\p_\theta(\x_t)$, outputs a probability distribution of $x_0^i$:
$$
\p_\theta^{(i)}(\x_t)=\begin{cases}
   \mathrm{softmax}(\f_\theta^{(i)}(\x_t)), &i\in[L]:x_t^i=\mathtt{m},\\
   \e_{x_t^i},&i\in[L]:x_t^i\neq \mathtt{m}
\end{cases}\in\Delta^{V-1},
$$
where $\Delta^{V-1}:=\{\x\in\R_+^V:\sum_{i=1}^V \x^i=1\}\subset \R^V$ is the $(V-1)$-dimensional simplex. For $i:x_t^i=\mathtt{m}$, $\f_\theta^{(i)}(\x_t)\in\R^V$ is the model logits and $\p_\theta^{(i)}(\x_t)\in \Delta^{V-1}$ is a probability vector with the $j$-th component, denoted by $p^{(i)}_\theta(\x_t)_j$ or $p_\theta^{(i)}(j\mid \x_t)$, predicting the conditional probability $p_{0|t}(x_0^i=j\mid \x_t)$ for all $j\in \cV$. Notably, by the identity \citep{shi2024simplified,ou2025your,hasan2026discrete}:
$$
s(t,\x_t)_{i,j}:=\frac{p_t(\x_t^{\backslash i}\odot j)}{p_t(\x_t)}=\delta\{\mathtt{m},j\}+\frac{\alpha_t}{1-\alpha_t}p_{0|t}(x_0^i=j\mid \x_t),\quad x_t^i=\mathtt{m},
$$
parameterizing the denoising probability is equivalent to parametrizing the score function up to some scalar:
$$
s_\theta(t,\x_t)_{i,j}=\frac{\alpha_t}{1-\alpha_t}p_\theta^{(i)}(\x_t)_j,\quad x_t^i=\mathtt{m}, \ j\in \cV.
$$
Here the denoiser network does not depend on the time $t$ because the partially unmasked sequence $\x_t$ implicitly contained the time information \citep{shi2024simplified,kim2025train,nie2026large}. In the remainder of this work, we always consider the denoiser parameterization $\p_\theta(\x_t)$ (which will be replaced by actions) instead of the score parameterization. Also, the reverse rate matrix is
$$
Q^\leftarrow_{1-t}(\x,\x^{\backslash i}\odot j)=-\frac{1}{\alpha_t}\frac{\partial \alpha_t}{\partial t}s(t,\x)_{i,j}=-\frac{1}{1-\alpha_t}\frac{\partial \alpha_t}{\partial t}p_{0|t}(x_0^i=j\mid \x)=\frac{1}{t}p_{0|t}(x_0^i=j\mid \x),\quad x^i=\mathtt{m},\ j\in\cV,
$$
Hence, the reverse process parametrized by $\p_\theta$ has rate matrix 
\begin{equation}\label{rate:reverse}
Q^\theta_t(\x,\x^{\backslash i}\odot j)=\frac{1}{1-t}p_\theta^{(i)}(\x)_j,\quad x^i=\mathtt{m},\ j\in \cV.
\end{equation}

\subsection{Post-Training}\label{sec:posttraining}

\textbf{Policy Parameterization.} 
In RL, an action $\a_t$ is sampled from a parameterized stochastic policy to encourage exploration: $\a_t\sim \pi^\theta(\cdot\mid t,\x_t)$ given the current time-state pair $(t,\x_t)$. Since the reverse masked diffusion dynamics \eqref{rate:reverse} is fully determined by the denoiser probability vector, we consider the action space $\mathcal{A}=(\Delta^{V-1})^L$. Then $\pi(\cdot\mid t,\x)$ is a distribution over the product simplex $(\Delta^{V-1})^L$. For simplicity, we let the parameterized policy $\pi^\theta$ factorize over all dimensions: 
$$
\pi^\theta(\a_t\mid t,\x_t)=\prod_{i=1}^L \pi^{\theta,(i)}(\a_t^{(i)}\mid t,\x_t),
$$
where $\pi^{\theta,(i)}(\cdot\mid t,\x_t)$ is a distribution over $\Delta^{V-1}$,
and $\a_t^{(i)}\in\Delta^{V-1}$ is a probability vector over $\cV$ for all $i\in [L]$. 
At time $t$, we sample from $\mathrm{Cat}(\cdot;\a_t^{(i)})$, instead of $\mathrm{Cat}(\cdot;\p_\theta^{(i)}(\x_t))$, to predict the clean token at the $i$-th position. 
So $\a_t^{(i)}$ can be viewed as performing exploration over the denoiser $\p_\theta^{(i)}(\x_t)$. 
As a result, we need to specify the dimension-wise policies $\pi^{\theta,(i)}(\cdot\mid t,\x)$ for all $i$. 

We emphasize that the policy parameterization must be chosen carefully. In particular, $\pi^\theta(\cdot\mid t,\x)$ should have a tractable likelihood, and the resulting probability ratio
$
\rho_t^\theta
:=
\pi^\theta(\a_t\mid t,\x_t)/\pi^{\theta_{\mathrm{old}}}(\a_t\mid t,\x_t)
$
should be amenable to stable optimization. Several policy parameterizations that perform full-logit exploration and induce distributions supported on the full simplex, including Dirichlet, temperature softmax, and logistic-normal policies, are summarized in Appendix~\ref{app2}, together with their practical implementation details. 
However, for masked dLLMs with a large vocabulary size $(|\cV|\approx 1.2\times10^5)$, it is inefficient to explore the full simplex: the learned denoiser distribution $\p_\theta$ is typically highly concentrated, with probabilities spanning several orders of magnitude and effectively residing on a much lower-dimensional region of the simplex, often near a small number of vertices. As discussed in Appendix~\ref{app2}, applying full-simplex exploration in this regime amounts to exploring the entire vocabulary, which can lead to unstable optimization due to the large vocabulary size.
Motivated by the empirical success of d1 \citep{zhao2025d}, 
we consider a policy family whose support is concentrated on the $V$ vertices of the simplex $\Delta^{V-1}$, assigning probability mass directly to the vertices rather than exploring its interior through a continuous density. 
Nevertheless, the CTRL perspective suggests that broader exploration of the denoiser distribution still be beneficial. Consequently, while vertex-supported policies are used for optimization, we retain simplex-level exploration during rollout generation in our experiments.

Following prior works \citep{zhao2025d,zhao2025diffpo,wang2026spg,wang2026revolutionizing,oba2026diffusion}, we focus on the GRPO algorithm for fine-tuning masked dLLMs to avoid training an expensive value network in PPO, though we also describe a possible value function approximation without training a network in Appendix~\ref{app3}. Let $(\x_t)_{t=0}^T$ be a decoding trajectory, where $\x_0$ is a prompt concatenated with mask tokens of the completion length $L$, $\x_T$ is the final output corresponding to the prompt, and $T \in\Z_+$ is the number of denoising steps. 
The trajectory can be generated by any inference strategy, 
while we adopt a semi-autoregressive strategy with low-confidence remasking and random sampling \citep{nie2026large,zhao2025d}. 
Specifically, at time step $t$, we first select the $k$ masked positions with the highest confidence score in the current block $B_t\subset [L]$, denoted by
\[
\mathcal{M}_t^{\mathrm{TopkConf}}
:=
\left\{
i\in B_t :
x_t^i=\mathtt{m},\;
c_t^i
\in
\operatorname{Topk}
\left(
\{c_t^i\}_{i\in B_t:x_t^i=\mathtt{m}}
\right)
\right\},\quad c_t^i
:=
\max_{j\in\mathcal V}
p_{\theta}^{(i)}(\mathbf{x}_t)_j,
\]
to unmask, and then sample according to $\mathrm{Cat}(\cdot;\a_t^{(i)})$ for all $i\in \cM_t^{\mathrm{TopkConf}}$:
\begin{equation}\label{our_sample}
\a_t^{(i)}\sim \pi^{\theta,(i)}(\cdot\mid t,\x_t)\quad \text{then}\quad \a_t^{(i)}=
\begin{cases}
    \e_j \quad \text{w.p.}\ p_\theta^{(i)}(\x_t)_j,\ j\in\cV, &\text{for}\ i\in\cM_t^{\mathrm{TopkConf}},\\
   \e_{x_t^i}, &\text{for}\ i\notin \cM_t^{\mathrm{TopkConf}}
\end{cases}\in \Delta^{V-1}.
\end{equation}

The corresponding intermediate and terminal reward functions are given by \eqref{ELBO}:
\begin{equation}\label{ourreward}
\tilde{r}(t,\x_t;\pi^\theta(\cdot\mid t,\x_t))=\mathrm{IRF}(\x_t)-\frac{\beta}{1-t}\sum_{i:x_t^i=\mathtt{m}} D_\mathrm{KL}(\p_\theta^{(i)}(\x_t) \,\Vert\, \p_{\theta_\mathrm{pre}}^{(i)}(\x_t))\quad \text{and}\quad h(\x_1)=\mathrm{TRF}(\x_1),
\end{equation}
and the probability ratio is
\begin{equation}\label{ourratio}
\rho_t^\theta=\frac{\pi^\theta(\a_t\mid t,\x_t)}{\pi^{\theta_\mathrm{old}}(\a_t\mid t,\x_t)}=\frac{\prod_{i\in\cM_t^\mathrm{TopkConf}}\pi^{\theta,(i)}(\a_t^{(i)}\mid t,\x_t)}{\prod_{i\in\cM_t^\mathrm{TopkConf}}\pi^{\theta_\mathrm{old},(i)}(\a_t^{(i)}\mid t,\x_t)}=\prod_{i\in\cM_t^\mathrm{TopkConf}}\frac{p_\theta^{(i)}(x_T^i\mid \x_t)}{p_{\theta_\mathrm{old}}^{(i)}(x_T^i\mid \x_t)}.
\end{equation}
Thus, the GRPO loss is

\begin{equation}\label{ourloss}
\cL(\theta)=\E_{(\x_{g,0},\x_{g,1},\cdots,\x_{g,T})\}_{g=1}^G\sim \pi^{\theta_\mathrm{old}}}\left[
\frac{1}{G}\sum_{g=1}^G \frac{1}{T} \sum_{t=1}^T \min\left(
\rho_{g,t}^\theta A_{g,t}, \mathrm{clip}\left(\rho_{g,t}^\theta,1-\epsilon,1+\epsilon\right) A_{g,t}
\right)
\right],
\end{equation}
where $T=L/k$, $\rho_{g,t}^\theta=\prod_{i\in\cM_{g,t}^\mathrm{TopkConf}}p_\theta^{(i)}(x_{g,T}^i\mid \x_{g,t})/p_{\theta_\mathrm{old}}^{(i)}(x_{g,T}^i\mid \x_{g,t})$, and the computation of $A_{g,t}$ is implied by \eqref{ourreward} and Algorithm~\ref{alg:GRPO}.

\textbf{Practical Considerations.} Our CTRL-based GRPO loss in \eqref{ourloss} cannot be evaluated in a single causal forward pass over an entire denoising trajectory due to the evaluation of $\{\rho_{g,t}^\theta\}_{t\in [T]}$. In online GRPO-based methods for dLLMs, this leads to a fundamental trade-off: on one hand, accurate likelihood computation under a CTRL formulation and the incorporation of intermediate rewards to optimize intermediate steps both require conditioning on the intermediate state $\x_t$ at each time step $t$; on the other hand, this necessitates multiple forward passes along the trajectory, incurring computational and memory costs proportional to the prompt batch size, group size $G$, and number of denoising steps $T$, as all intermediate activations must be stored for backpropagation. Existing approaches make different compromises. 
For instance, d1 \citep{zhao2025d} computed the loss using a single forward pass by conditioning only on the perturbed prompt. DTRPO \citep{zhang2026dtrpo} estimated the trajectory probability via a single forward pass of a re-masked final state; however, it is designed for offline DPO-style objectives and is difficult to extend to online GRPO settings. D2 \citep{wang2026d2} utilized block composite likelihood as an estimator to reduce the number of forward passes. 

In practice, to reduce the number of required forward passes in our framework, we propose trajectory subsampling, which yields an unbiased estimator of the full GRPO loss in \eqref{ourloss}. At each optimizer update we draw a subset $\cT\subset [T]$ of size $N=|\cT|$ uniformly at random without replacement, the same subset being reused for the policy, old, and reference passes so the ratio \eqref{ourratio} is consistent. We then form the subsampled estimator
\begin{equation*}
\hat{\cL}_\cT(\theta)=\E_{(\x_{g,0},\x_{g,1},\cdots,\x_{g,T})\}_{g=1}^G\sim \pi^{\theta_\mathrm{old}}}\left[
\frac{1}{G}\sum_{g=1}^G \frac{1}{N} \sum_{t\in \cT} \min\left(
\rho_{g,t}^\theta A_{g,t}, \mathrm{clip}\left(\rho_{g,t}^\theta,1-\epsilon,1+\epsilon\right) A_{g,t}
\right)
\right],
\end{equation*}
which reduces the number of forward passes from $T$ to $N$ per rollout.
One can easily see that this estimator is unbiased by the fact that $\P(t\in \cT)=N/T$:
$\E_\cT [\hat{\cL}_\cT(\theta)]=\cL(\theta)$. When $N$ is too small, the model fails to reliably converge to competitive performance due to inaccurate loss estimation. Conversely, increasing $N$ substantially raises the number of forward passes, thereby reducing training efficiency. Our experiments indicate that $N=8$ provides a favorable trade-off, achieving performance comparable to $N=16$ and $N=32$ while requiring substantially fewer function evaluations. More theoretical derivation and practical implementation details including the ablation study on $N$ are provided in Appendix~\ref{app2-4}.

%% file: section5.tex
\section{Experiments}
\label{sec5}
We showcase our method for fine-tuning MDMs on two benchmarks: $2$-dimensional synthetic example with PPO, 
and higher-dimensional reasoning and coding dataset for dLLMs with GRPO.

\subsection{Synthetic Example}
\textbf{Setup.} We consider a synthetic example described in Discrete Adjoint Matching (DAM) \citep{so2026discrete}, where we fine-tune an MDM on a $90\times 90$ discrete
grid so that its terminal sampling distribution matches a prescribed
\textit{checkerboard} target (the first subfigure in Figure~\ref{fig:compare_checkerboard}). The target is the entropy-regularized optimum
$p^*\propto p^\mathrm{base}\,e^{h/\beta}$.
We solve the task with our proposed PPO algorithm. 

We use a two-dimensional discrete space
$\cX \;=\; \{\mathtt{m},1,2,\dots,90\}^2$, and a state is a pair $\x=(x^1,x^2)$ in
which each coordinate is either masked or holds a token. The MDM begins at the fully masked state $\mathbf{X}_0=(\mathtt{m},\mathtt{m})$ and unmasks exactly
one token per step, so there are $T=2$ denoising steps. The base/pretrained denoiser is uniform: at every masked
position it predicts a uniform distribution over the $V$ tokens, so the base
terminal distribution $p^\mathrm{base}_1$ is uniform on the grid and all target structure
is induced by the reward. We use the softmax exponential-temperature exploration policy described in Appendix~\ref{app2-2}. The PPO advantage is the value difference across one unmasking step. Because only the states reachable in the two-step process matter, we store an explicit logit table indexed by the context (position, other coordinate), where the other coordinate is a revealed token in $\{1,\cdots,90\}$ or the mask token, and we evaluate the value $V^\theta$ exactly by computing the surrogate's expectation over all $V$ tokens for each visited context in closed-form. Further implementation details are provided in Appendix~\ref{app4-1}.

\begin{figure}[!t]
\centering

\begin{subfigure}{0.158\textwidth}
    \includegraphics[width=\linewidth]{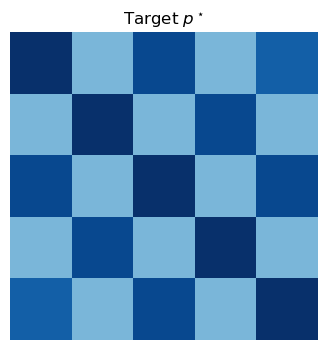}
\end{subfigure}%
\begin{subfigure}{0.158\textwidth}
    \includegraphics[width=\linewidth]{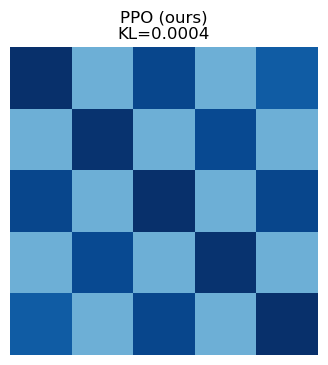}
\end{subfigure}%
\begin{subfigure}{0.158\textwidth}
    \includegraphics[width=\linewidth]{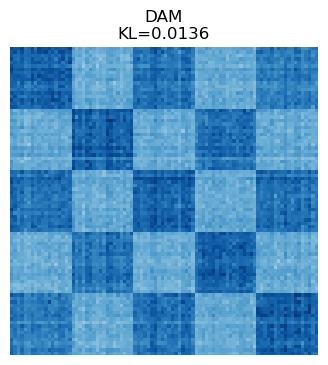}
\end{subfigure}%
\begin{subfigure}{0.158\textwidth}
    \includegraphics[width=\linewidth]{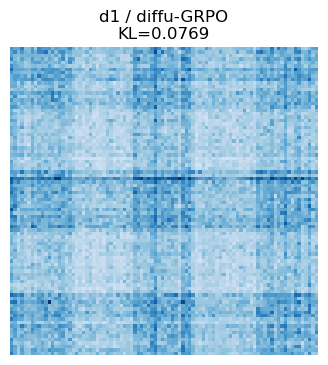}
\end{subfigure}%
\begin{subfigure}{0.2\textwidth}
    \includegraphics[width=\linewidth]{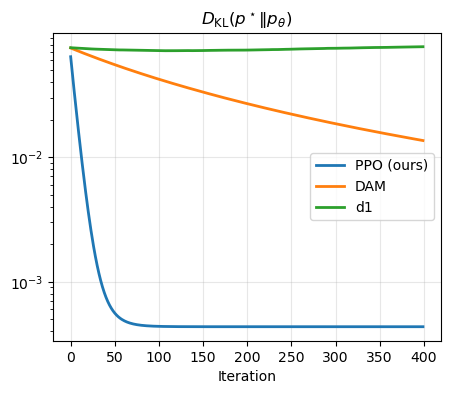}
\end{subfigure}%
\begin{subfigure}{0.2\textwidth}
    \includegraphics[width=\linewidth]{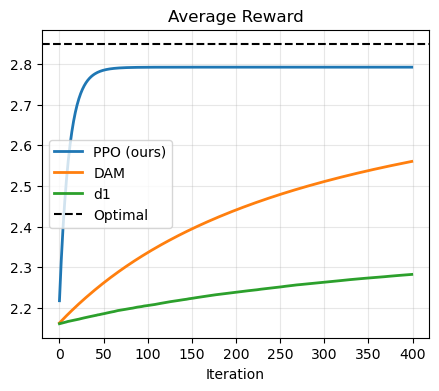}
\end{subfigure}
\caption{Three-way comparison. PPO and DAM converge toward $p^*$; d1 plateaus far above it.}
\label{fig:compare_checkerboard}
\end{figure}

\textbf{Results.} We select DAM \citep{so2026discrete} and d1 \citep{zhao2025d} as the baselines. The experimental results are presented in Figure~\ref{fig:compare_checkerboard}. PPO aligns most closely with $p^*$, both visually and numerically.
In this $L=2$ tabular setting, PPO uses
an exact critic, so it converges faster and tighter than DAM's sample-based adjoint.
d1 was designed for large-vocabulary reasoning with informative prompts, so this synthetic distribution-matching task isolates the objective rather than d1's intended use case.

\subsection{Mathematical Reasoning and Coding Tasks}
\textbf{Setup.} We evaluate our GRPO algorithm on four standard mathematical reasoning tasks: GSM8K \citep{cobbe2021training}, which consists of multi-step grade-school math word problems; MATH500 \citep{lightman2024let}, a collection of high-school competition mathematics problems; $3$-number Countdown, a combinatorial arithmetic game in which models must reach a target number using basic arithmetic operations on three given numbers; and $4\times 4$ Sudoku, which requires constraint satisfaction and systematic reasoning to complete the grid. 
We employ LLaDA-8B-Instruct \citep{nie2026large} without supervised fine-tuning as the base model, and adopt the experiment setting in d1 \citep{zhao2025d}. We choose d1 \citep{zhao2025d}, SPG with mixture \citep{wang2026spg}, and d2 \citep{wang2026d2} as the baselines: d1 is the first GRPO-based fine-tuning algorithm for dLLMs, and SPG and d2 are included because they are representative methods that outperform d1, while requiring multiple forward passes during training. For each method, we fine-tune a separate model on each task with completion lengths of $128$ and $256$ in $1000$ steps (due to the time cost), and evaluate each resulting
model at completion lengths of $128$ and $256$, accordingly. 

\begin{figure}[!t]
\centering
    \includegraphics[width=\linewidth]{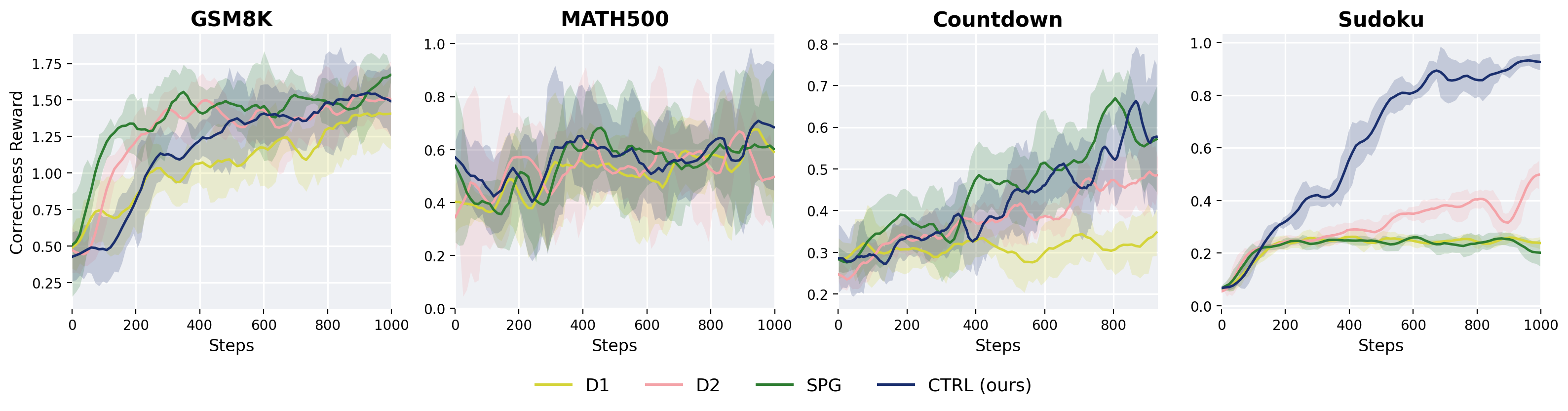}
\caption{Correctness reward dynamics of CTRL during RL training with a completion length of $128$, compared with d1, d2, and SPG.}\label{fig:reward_len128}
\end{figure}

\begin{table}[!t]
\centering
\begin{tabular}{l cc cc cc cc}
\toprule
& \multicolumn{2}{c}{\textbf{GSM8K}} & \multicolumn{2}{c}{\textbf{MATH500}} & \multicolumn{2}{c}{\textbf{Countdown}} & \multicolumn{2}{c}{\textbf{Sudoku}} \\
\cmidrule(lr){2-3} \cmidrule(lr){4-5} \cmidrule(lr){6-7} \cmidrule(lr){8-9}
\textbf{Model / Seq Len} & \textbf{128} & \textbf{256}  & \textbf{128} & \textbf{256}  & \textbf{128} & \textbf{256}  & \textbf{128} & \textbf{256}  \\
\midrule
LLaDA-8B-Instruct & 67.9 & 76.1 & 26.8 & 33.4 & 21.2 & 17.2  & 11.5 & 7.1 \\
D1 & 70.9 & 76.6 & 28.2 & 36.6 & 28.4 & 29.5 & 22.9 & 15.7 \\
D2 & 74.8 & 78.9 & 32.2 & 37.8 & 47.8 & 49.2 & 29.9 & 21.4 \\
SPG & 75.7 & 78.1 & 32.0 & 37.8 & 64.9 & 57.0 & 26.1 & 26.0 \\
\midrule
CTRL & 74.3 & 80.3 & 32.6 & 38.2 & 58.7 & 52.8  & 88.2 & 51.7 \\
\bottomrule
\end{tabular}
\caption{Evaluation results in 1000 training steps for each method on reasoning tasks. All evaluations are conducted using the evaluation code from d1 with $0$-shot prompting.  CTRL significantly outperforms all baselines on Sudoku and achieves the best performance on GSM8K and MATH500.}
\label{tab:dllm_evaluation}
\end{table}

\textbf{Results.} The correctness reward curves during the RL training with a completion length of $128$ are depicted in Figure~\ref{fig:reward_len128}. For the Sudoku experiments, we note that SPG uses $3$-shot prompting during training, whereas all four methods in our experiments are trained with $0$-shot prompting. Despite this more challenging setting, CTRL consistently converges to a correctness reward of $0.92$ on Sudoku in $1000$ training steps, whereas d2 converges much more slowly, and both d1 and SPG plateau at around $0.25$. We note that our success on Sudoku can be attributed to using the intermediate reward as process supervision, which guides the model to generate valid answers (see Figure~\ref{fig:sudoku_interreward} in Appendix). Further experimental details including the specification of intermediate reward functions are provided in Appendix~\ref{app4-2}. Table~\ref{tab:dllm_evaluation} reports the best evaluation results achieved by each method over all checkpoints. For fair comparison, all evaluations are conducted with $0$-shot prompting, using the evaluation code from d1 with a batch size of $8$. Across nearly all experimental settings except Countdown, CTRL achieves comparable or superior accuracy to all baselines, demonstrating its effectiveness for fine-tuning moderate-sized dLLMs with moderate sequence lengths. In particular, CTRL substantially outperforms the competing methods on Sudoku with a completion length of $128$; all methods collapse when the completion length is increased to $256$, suggesting that shorter sequence lengths are better suited to this task. By contrast, the performance gains on GSM8K and MATH500 are more modest across different baseline methods, which likely reflects the limited pretraining capability of the base model.

\textbf{Extension to Coding.} We further extend CTRL to coding
tasks, fine-tuning the LLaDA-8B-Instruct base model on the KodCodeLight-RL-10K dataset \citep{xu2025kodcode} and evaluating on HumanEval
and MBPP benchmarks. As shown in Table~\ref{tab:dllm_evaluation_coding},
CTRL consistently improves the accuracy on all benchmarks over the
baselines across different generation
lengths, demonstrating its strong ability in the coding domain.

\begin{table}[!t]
\centering
\begin{tabular}{l cc cc}
\toprule
& \multicolumn{2}{c}{\textbf{HumanEval}} & \multicolumn{2}{c}{\textbf{MBPP}} \\
\cmidrule(lr){2-3} \cmidrule(lr){4-5} 
\textbf{Model / Seq Len} & \textbf{128} & \textbf{256}  & \textbf{128} & \textbf{256}  \\
\midrule
LLaDA-8B-Instruct & 24.8 & 34.8 & 39.7 & 40.5\\
D1 &  29.3 & 39.0  & 42.0 &  45.5\\
D2 & 39.6 & 48.7 & 45.6 & 46.8 \\
SPG &  29.3 & 40.2  & 44.4 & 44.8  \\
\midrule
CTRL & 62.8 & 66.2 & 52.9 & 57.7  \\
\bottomrule
\end{tabular}
\caption{Evaluation results in 1000 training steps for each method on coding tasks. CTRL consistently outperforms all baselines on HumanEval and MBPP benchmarks.}
\label{tab:dllm_evaluation_coding}
\end{table}

%% file: section6.tex
\section{Conclusion and Future Work}
\label{sec6}
We introduce a theoretical framework of reinforcement learning in continuous time and discrete spaces, with applications to fine-tuning score-based discrete diffusion models,
especially masked diffusion models 
by setting the score/denoiser as the action.
Our framework naturally accommodates intermediate rewards into the RL objective, and leads to algorithms that do not require differentiable rewards. 
We demonstrate the effectiveness of our methods by
both solving low-dimensional entropy-regularized optimization problems and fine-tuning diffusion large language models on reasoning and coding tasks. Future work will focus on developing refined techniques to reduce the number of function evaluations associated with the computation of importance sampling probability ratios under the continuous-time RL framework,
and improve the computational efficiency of PPO-based RL post-training for large-scale problems.

%% file: appendix.tex
\begin{appendix}

\section{Proofs}

\subsection{Proof of Proposition~\ref{prop:exploratory}}\label{app1-0}

\subsubsection{Heuristic Law-of-Large-Numbers Argument}

Analogous to the law-of-large-numbers argument in \cite{wang2020continuous}, we first give a heuristic proof.
Fix $(t,x)\in [0,T]\times \cS$. Consider $N$ independent copies of the controlled problem
run under the \emph{same} feedback policy $\pi(\cdot\mid t,x)$. In copy $i$, an action
$a^i_t\sim\pi(\cdot\mid t,x)$ is drawn independently, and over $[t,t+\Delta t)$ the copy
attempts a jump to $y\neq x$ with probability $R(t,x,a^i_t)_y\,\Delta t + o(\Delta t)$. Let
\[
J^i_t(y) := \mathbf{1}\{\text{copy } i \text{ jumps to } y \text{ over } [t,t+\Delta t)\}.
\]
Then $\E[J^i_t(y)\mid a^i_t] = R(t,x,a^i_t)_y\,\Delta t + o(\Delta t)$, and taking
expectation over $a^i_t\sim\pi$,
\[
\E[J^i_t(y)]
= \Big(\int_{\cA} R(t,x,a)_y\,\pi(a\mid t,x)\,\rmd a\Big)\Delta t + o(\Delta t)
= \tilde{R}(t,x;\pi(\cdot\mid t,x))_y\,\Delta t + o(\Delta t).
\]
Since the copies are i.i.d., the law of large numbers gives, as $N\to\infty$,
\[
\frac1N\sum_{i=1}^N J^i_t(y)
\xrightarrow{\text{a.s.}} \E[J^i_t(y)]
= \tilde{R}(t,x;\pi(\cdot\mid t,x))_y\,\Delta t + o(\Delta t).
\]
Thus the empirical infinitesimal jump frequency from $x$ to $y$, averaged over the
externally randomized copies, equals the exploratory rate $\tilde{R}(t,x;\pi(\cdot\mid t,x))_y$.

\subsection{Rigorous Proof}

Now we give a rigorous proof by showing that both processes solve the same martingale problem,
which for time-inhomogeneous CTMCs on a discrete space has a unique solution.

The controlled process $X^\pi$ is built by: in state $x$ at time $t$, first sample
$a\sim\pi(\cdot\mid t,x)$, then evolve infinitesimally under generator $\mathcal{L}^a$. To
identify the law of $X^\pi$ we compute the infinitesimal evolution of
$\E[f(t,X^\pi_t)]$, i.e.\ the generator obtained after averaging over the action.

Let $f:[0,T]\times\cS\to\R$ be bounded. By~\eqref{pidynamics}, conditioning on the
state $x^\pi_t=x$ and the sampled action $a^\pi_t=a$,
\[
\E\big[f(t+\Delta t,X^\pi_{t+\Delta t})\mid x^\pi_t=x,\,a^\pi_t=a\big]
= \sum_{y\in\cS}\Big(\delta\{x,y\} + R(t,x,a)_y\,\Delta t + o(\Delta t)\Big) f(t+\Delta t,y).
\]
Using
$\sum_y\delta\{x,y\}f(t+\Delta t,y) = f(t+\Delta t,x)
= f(t,x) + \frac{\partial f}{\partial t}(t,x)\,\Delta t + o(\Delta t)$
and
$\inner{R(t,x,a)}{f(t+\Delta t, \cdot)}=\inner{R(t,x,a)}{f(t,\cdot)} + o(1)$,
this becomes
\[
f(t,x)
+ \Big(\tfrac{\partial f}{\partial t}(t,x) + \inner{R(t,x,a)}{f(t,\cdot)}\Big)\Delta t
+ o(\Delta t)
= f(t,x) + \mathcal{L}^a f(t,x)\,\Delta t + o(\Delta t).
\]
Now average over $a\sim\pi(\cdot\mid t,x)$. By linearity of $\mathcal{L}^a$ in $R(t,x,a)$,
and finiteness of $\cS$ (so the sum over $y$ and the expectation over $a$ may be
interchanged),
\[
\E_{a\sim\pi(\cdot\mid t,x)}\big[\mathcal{L}^a f(t,x)\big]
= \frac{\partial f}{\partial t}(t,x)
+ \Big\langle\, \underbrace{\E_{a\sim\pi(\cdot\mid t,x)} R(t,x,a)}_{=\,\tilde{R}(t,x;\pi(\cdot\mid t,x))},\; f(t,\cdot)\,\Big\rangle.
\]
Define the averaged generator
\[
\tilde{\cL} f(t,x) := \E_{a\sim\pi(\cdot\mid t,x)}\big[\mathcal{L}^a f(t,x)\big]
= \frac{\partial f}{\partial t}(t,x) + \inner{\tilde{R}(t,x;\pi(\cdot\mid t,x))}{f(t,\cdot)}.
\]
Therefore, we have
\begin{equation}\label{eq:star}
\E\big[f(t+\Delta t,X^\pi_{t+\Delta t})\mid x^\pi_t=x\big]
= f(t,x) + \tilde{\cL} f(t,x)\,\Delta t + o(\Delta t).
\end{equation}

On the other hand, the exploratory process $\tilde{X}^\pi$ jumps directly with rate $\tilde{R}(t,x;\pi(\cdot\mid t,x))$, with no external
action sampling. Repeating the identical one-step computation
using~\eqref{RLprocess},
\begin{equation}\label{eq:starstar}
\E\big[f(t+\Delta t,\tilde{X}^\pi_{t+\Delta t})\mid \tilde{X}^\pi_t=x\big]
= f(t,x) + \tilde{\cL} f(t,x)\,\Delta t + o(\Delta t).
\end{equation}

Comparing~\eqref{eq:star} and~\eqref{eq:starstar}, both processes have the same
infinitesimal generator $\tilde{\cL}$, and thus have equal one-step transition rates.

To conclude equality of the full path laws, we invoke uniqueness of the martingale problem. Define
\[
M^f_t := f(t,X_t) - f(0,X_0) - \int_0^t \tilde{\cL} f(s,X_s)\,\rmd s
\]
for bounded $f:[0,T]\times\cS\to\R$.
A process $X$ with $X_0\sim\rho$ \emph{solves the martingale problem for $(\tilde{\cL},\rho)$} if
$M^f_t$ is a martingale for every bounded $f$. Hence, it suffices to prove that both $X^\pi$ and $\tilde{X}^\pi$ solve this martingale problem. For $\tilde{X}^\pi$, this is the standard Dynkin's formula (see, e.g., \citet[Lemma~3]{benton2024denoising}) applied to the CTMC with rate
$\tilde{R}$, so $M^f_t(\tilde{X}^\pi)$ is a martingale by construction. For $X^\pi$, we must check the controlled process is also a martingale solution for the averaged generator $\tilde{\cL}$. Let $\cF_t=\sigma(X^\pi_s:s\le t)$. For $u<t$, we have
\[
\E\big[M^f_t(X^\pi) - M^f_u(X^\pi)\mid\cF_u\big]
= \E\Big[f(t,X^\pi_t) - f(u,X^\pi_u) - \int_u^t \tilde{\cL} f(s,X^\pi_s)\,\rmd s \,\Big|\, \cF_u\Big].
\]
Partition $[u,t]$ into mesh $\Delta s$. By~\eqref{eq:star}, the conditional increment of $f$
over each subinterval, after averaging the external action $a^\pi_s$ (sampled independently
of the past given the current state), is
\[
\E\big[f(s+\Delta s,X^\pi_{s+\Delta s}) - f(s,X^\pi_s)\mid\cF_s\big]
= \tilde{\cL} f(s,X^\pi_s)\,\Delta s + o(\Delta s).
\]
Crucially, the averaging in~\eqref{eq:star} uses the Markov property: the action
$a^\pi_s$ at time $s$ depends on the past only through $X^\pi_s$, so conditioning on $\cF_s$
and then on $a^\pi_s$ collapses to the state-conditional average
$\E_{a\sim\pi(\cdot\mid s,X^\pi_s)}$, yielding $\tilde{\cL}$. Summing over the partition and letting
$\Delta s\to0$ (dominated convergence is valid since $\cS$ is finite and the rates are
bounded), the telescoping sum gives
\[
\E\big[f(t,X^\pi_t) - f(u,X^\pi_u)\mid\cF_u\big]
= \E\Big[\int_u^t \tilde{\cL} f(s,X^\pi_s)\,ds \,\Big|\, \cF_u\Big],
\]
and thus $\E[M^f_t - M^f_u\mid\cF_u]=0$. Hence, $X^\pi$ also solves the martingale problem for
$(\tilde{\cL},\rho)$.

Finally, for a CTMC on a finite state space with
measurable, bounded time-dependent rate matrix $\tilde{R}(t,\cdot;\pi)$, the martingale problem
for $(\tilde{\cL},\rho)$ is well posed: there exists a unique law on $D([0,T],\cS)$ solving it \citep[Theorem 4.1]{ethier2009markov}. Since both $X^\pi$ and $\tilde{X}^\pi$ solve the same well-posed martingale problem $(\tilde{\cL},\rho)$, by
uniqueness, they induce the same law on path space:
\[
\mathrm{Law}\big((X^\pi_t)_{t\in[0,T]}\big) = \mathrm{Law}\big((\tilde{X}^\pi_t)_{t\in[0,T]}\big).
\]
$\hfill\square$

\subsection{Proof of Lemma~\ref{lemma:FK}}\label{app1-1}
\textit{Proof of Lemma~\ref{lemma:FK}.}
Note that 
\begin{align*}
\E_{a\sim \pi(\cdot\mid t,x)}(\cL^a v(t,x)+r(t,x,a))={}&\int_\cA (\cL^a v(t,x)+r(t,x,a))\pi(a\mid t,x)\,\rmd a\\
={}&\frac{\partial v}{\partial t}(t,x)+\langle \tilde{R}(t,x;\pi(\cdot\mid t,x)), v(t,\cdot)\rangle+\tilde{r}(t,x;\pi(\cdot\mid t,x))\\
={}&\tilde{\cL}v(t,x)+\tilde{r}(t,x;\pi(\cdot\mid t,x)),
\end{align*}
where we denote $\tilde{\cL}v(t,x)=\E_{a\sim \pi(\cdot\mid t,x)} \cL^a v(t,x)$. Then Lemma~\ref{lemma:FK} follows from \citet[Theorem 5]{benton2024denoising}. $\hfill\square$

\subsection{Proof of Theorem~\ref{lemma:PG}}\label{app1-2}
\textit{Proof of Theorem~\ref{lemma:PG}.} The proof is similar to that of Theorem 5 in \citet{jia2022policy}. By Lemma~\ref{lemma:FK}, we replace $v(t,x)$ by $V^\theta(t,x)$ and obtain that for any $(t,x)\in [0,T]\times \cS$,
$$
\int_\cA (\cL^a V^\theta(t,x)+r(t,x,a))\pi^\theta(a\mid t,x)\,\rmd a=0.
$$
Taking derivative w.r.t. $\theta$ on both sides, we have
\begin{align*}
    &\int_\cA \left[(\cL^a \nabla_\theta V^\theta(t,x))\pi^\theta(a\mid t,x) + (\cL^a V^\theta(t,x)+r(t,x,a))\nabla_\theta \pi^\theta(a\mid t,x) \right]\rmd a\\
    ={}& \int_\cA \left[\left(\cL^a \nabla_\theta V^\theta(t,x) + (\cL^a V^\theta(t,x)+r(t,x,a))\nabla_\theta \log\pi^\theta(a\mid t,x)\right)\pi^\theta(a\mid t,x) \right]\rmd a=0.
\end{align*}
By again Lemma~\ref{lemma:FK}, we have
$$
\nabla_\theta V^\theta(t,x)=\E\left[\int_t^T (\cL^a V^\theta(s,x)+r(s,x,a))\nabla_\theta \log\pi^\theta(a\mid s,x) \,\rmd s\mid X_t^\theta=x  \right].
$$
By letting $t=0$ and integrating $x$, we conclude the proof. $\hfill\square$

\subsection{Proof of Proposition~\ref{prop:qfunction}}\label{app1-3}
\textit{Proof of Proposition~\ref{prop:qfunction}.}
The proof is similar to that of Theorem~3 in \cite{jia2023q}. We have
\begin{align*}
&Q_{\Delta t}(t,x,a;\pi)\\
={}&\E\left[
\int_t^{t+\Delta t}r(s,X_s^a,a)\,\rmd s+\E\left[\int_{t+\Delta t}^T r(s,X_s^\pi,a_s^\pi)\,\rmd s+h(X_T^\pi)\mid X_{t+\Delta t}^a\right]\mid X_t^{\hat{\pi}}=x
\right] \\
={}&\E\left[
\int_t^{t+\Delta t}r(s,X_s^a,a)\,\rmd s+V(t+\Delta t,X_{t+\Delta t}^a;\pi)\mid X_t^{\hat{\pi}}=x
\right] \\
={}&\E\left[
\int_t^{t+\Delta t}r(s,X_s^a,a)\,\rmd s+V(t+\Delta t,X_{t+\Delta t}^a;\pi)-V(t,X_t^a;\pi)\mid X_t^{\hat{\pi}}=x
\right] +V(t,x;\pi)\\
={}&\E\left[
\int_t^{t+\Delta t}r(s,X_s^a,a)\,\rmd s+\int_t^{t+\Delta t}\cL^a V(s,X_s^a;\pi)\,\rmd s\mid X_t^{\hat{\pi}}=x
\right] +V(t,x;\pi)\\
={}& V(t,x;\pi)+q(t,x,a;\pi)\cdot\Delta t+o(\Delta t),
\end{align*}
where the second to the last equality follows from Dynkin's formula \citep[Lemma 3]{benton2024denoising} and the last equality is because of the definition of $q$ and the approximation of the integral. $\hfill\square$

\subsection{Proof of Proposition~\ref{prop:diffusion}}\label{app:prop3}
\textit{Proof of Proposition~\ref{prop:diffusion}.}
For the off-diagonal target $y=i\neq x$, the only column contributing to row $y$ is column
$i$, giving $(\Phi a)_y = \Phi_{y,y}\,a_y = Q_{T-t}(y,x)\,a_y$, matching the embedding. For the
diagonal row $y=x$, $(\Phi a)_x = \sum_{i\neq x}\Phi_{x,i}\,a_i = -\sum_{i\neq x}Q_{T-t}(i,x)\,a_i
= -\sum_{y\neq x}R(t,x,a)_y$, matching the conservative diagonal. Each column $i$ has exactly
two nonzero entries, $+Q_{T-t}(i,x)$ in row $i$ and $-Q_{T-t}(i,x)$ in row $x$, so its column
sum is $Q_{T-t}(i,x)-Q_{T-t}(i,x)=0$.
$\hfill\square$

\subsection{Proof of Theorem~\ref{KLdivergence}}\label{app1-4}
\textit{Proof of Theorem~\ref{KLdivergence}.} The proof follows by applying Lemma~1 in \cite{zhang2025convergence}, where two CTMCs with rate matrices $\tilde{R}(t,x;\pi^\theta(\cdot\mid t,x))_y=Q_{T-t}(y,x)\E_{a\sim\pi^\theta(\cdot\mid t,x)}a_y$ and $\tilde{R}(t,x;\pi^{\theta_\mathrm{pre}}(\cdot\mid t,x))=Q_{T-t}(y,x) s_{\theta_\mathrm{pre}}(T-t,x)_y$ are applied to Girsanov's theorem.
$\hfill\square$

\section{Policy Parameterization Choices}
\subsection{Policy over the Full Simplex}\label{app2}
In this section, we present several policy parameterizations that explore the full simplex (and therefore full-vocabulary exploration) and are applicable to fine-tuning MDMs when the vocabulary size $V$ is reasonably small.

\subsubsection{Dirichlet Policy}

We first introduce an unbiased policy by leveraging Dirichlet distribution. Consider
$$
\a_t^{(i)}\sim \pi^{\theta,(i)}(\cdot\mid t,\x_t):=
\begin{cases}
    \mathrm{Dir}(\cdot;k_t\cdot \p^{(i)}_\theta(\x_t)), &i\in[L]: x_t^i=\mathtt{m},\\
   \delta_{\mathrm{Cat}(\cdot;\e_{x_t^i})}, &i\in[L]: x_t^i\neq \mathtt{m}
\end{cases}\in\Delta(\Delta^{V-1}).
$$
Here, $\p_\theta^{(i)}(\x_t)$ determines the mean of the Dirichlet distribution and the time-dependent positive scalar $k_t\in\R_+$ controls its variance. Then we have $\E_{\a_t^{(i)}\sim \pi^{\theta,(i)}(\cdot|t,\x_t)}\a_t^{(i)}=\p_\theta^{(i)}(\x_t)\in\Delta^{V-1}$. 

\begin{remark}[Interpretation and Intuition of Using Dirichlet Distribution] The Dirichlet distribution is defined over the simplex as the conjugate prior of the categorical distribution, playing a role somewhat
analogous to that of the Gaussian distribution in continuous  diffusion models \citep{richemond2022categorical}.  
Note that given $\alpha\in\R_+^V$, for any $k\in \R_+$, it holds that $\E (\text{Dir}(\cdot;\alpha))=\E(\text{Dir}(\cdot;k\cdot \alpha))=(\alpha_1/\alpha_0,\cdots,\alpha_V/\alpha_0)^\top$ with $\alpha_0:=\sum_{j=1}^V \alpha_j$. Thus, we can tune $k$ to control the variance of $\pi^\theta$ while preserving the mean. Further let $\tilde{\alpha}_j:=\alpha_j/\alpha_0$, then  $\text{Var}(\text{Dir}(\cdot;\alpha))=(\tilde{\alpha}_j(1-\tilde{\alpha}_j)/(1+\alpha_0))_{j\in [V]}$, from which we can see that if fixing the mean ($\p_\theta(\x_t)$), tuning $k$ ($k_t$) small leads to a large variance and thus adds more exploration; in contrast, tuning $k$ large reduces stochasticity. On the other hand, given a fixed $\alpha_0$ ($k_t$), if $\tilde{\alpha}_j$ ($p_\theta(\x_t)_j$) is too large or small (close to $1$ or $0$), meaning that the model gives a very high or low confidence for the token $j$, then the variance of $a_j$ is small and thus the predicted probability value of the token $j$ is more deterministic; else if $\tilde{\alpha}_j$ lies in the middle of $[0,1]$, meaning that the model is not confident enough on the outcome of token $j$, then the variance of $a_j$ is large and more exploration and stochasticity are injected in the dynamics by the algorithm.
\end{remark}

\textbf{Value and Reward Functions.} For Dirichlet policy, the value function in \eqref{valuefn} becomes
\begin{align*}
&V^\theta(t,\x)\\
={}&\E_{\mathbf{X}_s\sim p_s^\theta}\left[\mathrm{TRF}(\mathbf{X}_1)+ \int_t^1  \left(\mathrm{IRF}(\mathbf{X}_s)-\frac{\beta}{1-s} \sum_{i:X_s^i=\mathtt{m}}\sum_{j\in\cV} D_I(p_\theta^{(i)}(\mathbf{X}_s)_j \,\Vert\, p_{\theta_\mathrm{pre}}^{(i)}(\mathbf{X}_s)_j)\right)\rmd s \,\Bigg|\, \mathbf{X}_t=\x  \right]\\
={}&\E_{\mathbf{X}_s\sim p_s^\theta}\left[\mathrm{TRF}(\mathbf{X}_1)+\int_t^1 \left(\mathrm{IRF}(\mathbf{X}_s)- \frac{\beta}{1-s} \sum_{i:X_s^i=\mathtt{m}} D_\mathrm{KL}(\p_\theta^{(i)}(\mathbf{X}_s) \,\Vert\, \p_{\theta_\mathrm{pre}}^{(i)}(\mathbf{X}_s))\right)\rmd s \,\Bigg|\, \mathbf{X}_t=\x  \right].
\end{align*}

Consequently, for $t \in [0,1)$, the intermediate and terminal reward functions are specified as 
$$
\tilde{r}(t,\x_t;\pi^\theta(\cdot\mid t,\x_t))=\mathrm{IRF}(\x_t)-\frac{\beta}{1-t}\sum_{i:x_t^i=\mathtt{m}} D_\mathrm{KL}(\p_\theta^{(i)}(\x_t) \,\Vert\, \p_{\theta_\mathrm{pre}}^{(i)}(\x_t))\quad \text{and}\quad h(\x_1)=\mathrm{TRF}(\x_1).
$$
We can therefore define
\begin{align*}
r(t,\x_t,\a_t):={}&\mathrm{IRF}(\x_t)+\frac{\beta}{1-t}\sum_{i:x_t^i=\mathtt{m}}\left(D_\mathrm{KL}(\a_t^{(i)}\,\Vert\,\p_\theta^{(i)}(\x_t))-D_\mathrm{KL}(\a_t^{(i)}\,\Vert\, \p_{\theta_\mathrm{pre}}^{(i)}(\x_t)) \right) \\
={}&\mathrm{IRF}(\x_t)-\frac{\beta}{1-t}\sum_{i:x_t^i=\mathtt{m}} \sum_{j\in\cV}(\a_t^{(i)})_j\log\frac{p_\theta^{(i)}(\x_t)_j}{p_{\theta_\mathrm{pre}}^{(i)}(\x_t)_j},\quad \a_t\sim \pi^\theta(\cdot\mid t,\x_t).
\end{align*}
We can see that a large $r$ corresponds to a large $D_{\mathrm{KL}}(\a_t^{(i)} \,\Vert\, \p_\theta^{(i)}(\x_t))$, encouraging exploration beyond the current mean $\p_\theta$, while maintaining a small $D_{\mathrm{KL}}(\a_t^{(i)} \,\Vert\, \p_{\theta_{\mathrm{pre}}}^{(i)}(\x_t))$, thereby keeping the exploration close to the pretrained model $\p_{\theta_{\mathrm{pre}}}$.

\textbf{$q$-Function.}
For $\a_t\sim \pi^\theta(\cdot\mid t,\x_t)$, we can write 
$$
q(t,\x_t,\a_t;\pi^\theta(\cdot\mid t,\x_t))=\mathrm{IRF}(\x_t)+\frac{1}{1-t}\sum_{i:x_t^i=\mathtt{m}}\sum_{j\in\cV} (\a_t^{(i)})_j\left(V^{\theta}(t,\x_t^{\backslash i}\odot j)-V^{\theta}(t,\x_t)-\beta\log\frac{p_\theta^{(i)}(\x_t)_j}{p_{\theta_\mathrm{pre}}^{(i)}(\x_t)_j}\right).
$$

\textbf{Probability Ratio.} To sample $\a\sim \pi^\theta(\cdot\mid t,\x)=\mathrm{Dir}(\cdot;\alpha_\theta(t,\x))$ where $\alpha_\theta(t,\x)\in\R^V_+$, one can first sample $V$ independent Gamma random variables $g_j\sim \Gamma(\alpha_\theta(t,x)_j,1)$, and then set $a_j=g_j/\sum_{k=1}^V g_k$ for all $j\in [V]$. Since the density of Dir($\cdot;\alpha$) is
$$
\mathrm{Dir}(a;\alpha)=\frac{\Gamma(\sum_{j=1}^V \alpha_j)}{\prod_{j=1}^V \Gamma(\alpha_j)}\prod_{j=1}^V a_j^{\alpha_j-1},
$$
we have
$$
\log \mathrm{Dir}(a;\alpha)=\sum_{j=1}^V (\alpha_j-1)\log a_j-\sum_{j=1}^V \log\Gamma(\alpha_j)+\log \Gamma(\sum_{j=1}^V \alpha_j).
$$
Then we obtain the probability ratio
\begin{align*}
\frac{\pi^\theta(a)}{\pi^{\theta_\mathrm{old}}(a)}={}&\exp(\log \pi^\theta(a)-\log \pi^{\theta_\mathrm{old}}(a))\\
={}&\exp\left(\sum_{j=1}^V (\alpha_j^\theta-\alpha_j^{\theta_\mathrm{old}})\log a_j-\sum_{j=1}^V \log\frac{\Gamma(\alpha_j^\theta)}{\Gamma(\alpha_j^{\theta_\mathrm{old}})}+\log\frac{\Gamma(\sum_{j=1}^V \alpha_j^\theta)}{\Gamma(\sum_{j=1}^V \alpha_j^{\theta_\mathrm{old}})}\right).
\end{align*}
Specific to MDMs, we have
\begin{align}
    \rho^\theta_t={}&\frac{\pi^\theta(\a_t\mid t,\x_t)}{\pi^{\theta_\mathrm{old}}(\a_t\mid t,\x_t)}\notag\\
    ={}&\prod_{i\in B_t: x_t^i=\mathtt{m}} \frac{\pi^{\theta,(i)}(\a_t^{(i)}\mid t,\x_t)}{\pi^{\theta_{\mathrm{old}},(i)}(\a_t^{(i)}\mid t,\x_t)}\notag\\
    ={}& \prod_{i\in B_t: x_t^i=\mathtt{m}}\exp\left(\sum_{j\in\cV} k_t(p_\theta^{(i)}(\x_t)_j-p_{\theta_\mathrm{old}}^{(i)}(\x_t)_j)\log (\a_t^{(i)})_j-\sum_{j\in\cV} \log\frac{\Gamma(k_t\cdot p_\theta^{(i)}(\x_t)_j)}{\Gamma(k_t\cdot p_{\theta_\mathrm{old}}^{(i)}(\x_t)_j)}\right)\notag\\
     ={}& \exp\left(\sum_{i\in B_t: x_t^i=\mathtt{m}}\sum_{j\in\cV} \left(k_t(p_\theta^{(i)}(\x_t)_j-p_{\theta_\mathrm{old}}^{(i)}(\x_t)_j)\log (\a_t^{(i)})_j-\log\frac{\Gamma(k_t\cdot p_\theta^{(i)}(\x_t)_j)}{\Gamma(k_t\cdot p_{\theta_\mathrm{old}}^{(i)}(\x_t)_j)}\right)\right),\label{rho:Dirichlet}
\end{align}
where $B_t \subset [L]$ is the sequence indices of the current block corresponding to the current time step $t$.

\subsubsection{Temperature Softmax Policy}\label{app2-2}

Although Dirichlet policy is unbiased, the probability ratio has a complicated expression and suffers from high numerical instability, leading to unstable training. In dLLMs, people typically sample rollouts according to $\mathrm{softmax}(\f_\theta^{(i)}(\x_t)/\tau)$ \citep{zhao2025d}, where the temperature $\tau\geq 0$ control the sampling stochasticity (and exploration), and higher temperature leads to higher stochasticity: if $\tau=0$, it reduces to perform deterministic greedy sampling by unmasking the token with the largest logit value, $\argmax_{j\in\cV} f_\theta^{(i)}(\x_t)_j$; if $\tau=1$, then it is vanilla sampling according to the softmax applied to the model logits $\f_\theta^{(i)}(\x_t)$; and if $\tau=+\infty$, it corresponds to sampling from the uniform distribution over $\cV$, which is totally random. \cite{zhao2025d} chose $\tau=0.9$ for training. Here, since we view the probability simplex $\Delta^{V-1}$ as the action space, dividing the logits by a single $\tau$ followed by applying softmax cannot make the resulting probability vectors cover the whole high-dimensional simplex $\Delta^{V-1}$ by varying a one-dimensional $\tau$. Thus, we turn to consider \textit{dimension-aware} temperature softmax exploration:

$$
\a_t^{(i)}\sim \pi^{\theta,(i)}(\cdot\mid t,\x_t)\quad \text{then}\quad \a_t^{(i)}=
\begin{cases}
    \mathrm{softmax}\left((f_\theta^{(i)}(\x_t)_j/\tau_j)_{j\in \cV}\right), &i\in[L]: x_t^i=\mathtt{m},\\
   \mathrm{Cat}(\cdot;\e_{x_t^i}), &i\in[L]: x_t^i\neq \mathtt{m}
\end{cases}\in \Delta^{V-1}.
$$

where $\tau_j \overset{\text{i.i.d.}}{\sim} p,\ j\in\cV$ and $p$ is a continuous probability distribution over $(0,+\infty)$ which is to be picked. Here, each dimension of the logit $f^{(i)}_\theta(\x_t)_j$ is perturbed by a different but i.i.d. $\tau_j$, therefore enhancing the degree of freedom of $\tau$ to $V$ while avoiding increasing much complexity by specifying only a one-dimensional probability density $p$, in contrast to a complicated $(V-1)$-dimensional Dirichlet density. The dimension-aware temperature softmax approach also allows us to give an estimation of the probability ratio. Suppose $\a_t^{(i)}= \mathrm{softmax}((f_{\theta_\mathrm{old}}^{(i)}(\x_t)_j/\tau_j)_{j\in\cV})$ for some $\{\tau_j\}_{j\in\cV}$. One can see that there exists $\tau'_j=f_{\theta}^{(i)}(\x_t)_j\cdot  \tau_j/f_{\theta_\mathrm{old}}^{(i)}(\x_t)_j$ such that $\a_t^{(i)}=\mathrm{softmax}((f_{\theta_\mathrm{old}}^{(i)}(\x_t)_j/\tau_j)_{j\in\cV})=\mathrm{softmax}((f_{\theta}^{(i)}(\x_t)_j/\tau'_j)_{j\in\cV})$. Hence, we obtain that

\begin{align}
    \rho^\theta_t=\frac{\pi^\theta(\a_t\mid t,\x_t)}{\pi^{\theta_\mathrm{old}}(\a_t\mid t,\x_t)}
    ={}&\prod_{i\in B_t: x_t^i=\mathtt{m}} \frac{\pi^{\theta,(i)}(\a_t^{(i)}\mid t,\x_t)}{\pi^{\theta_{\mathrm{old}},(i)}(\a_t^{(i)}\mid t,\x_t)}\notag\\
    \approx{}& \prod_{i\in B_t: x_t^i=\mathtt{m}} \prod_{j\in\cV}\frac{f_\theta^{(i)}(\x_t)_j}{f_{\theta_\mathrm{old}}^{(i)}(\x_t)_j}\cdot\frac{p(\tau_j')}{p(\tau_j)}\notag
    \\
     ={}& \prod_{i\in B_t: x_t^i=\mathtt{m}}\prod_{j\in\cV} \frac{f_\theta^{(i)}(\x_t)_j}{f_{\theta_\mathrm{old}}^{(i)}(\x_t)_j}\cdot\frac{p(f_{\theta}^{(i)}(\x_t)_j\cdot  \tau_j/f_{\theta_\mathrm{old}}^{(i)}(\x_t)_j)}{p(\tau_j)},\label{rho:temperature}
\end{align}
where the approximation arises because the softmax function is not injective. One can also consider adaptive $\tau_t$, where $(\tau_t)_j \overset{\text{i.i.d.}}{\sim} p_t,\ j\in\cV$ for a time-dependent continuous distribution $p_t$. In particular, if we choose time-dependent $p_t$ as $\mathrm{Exp}(\lambda_t)$ which we refer to as exponential-temperature softmax policy, then 
\begin{align*}
    \rho^\theta_t\approx{}& \exp\left(\sum_{i\in B_t: x_t^i=\mathtt{m}}\sum_{j\in\cV}\left( \log\frac{f_\theta^{(i)}(\x_t)_j}{f_{\theta_\mathrm{old}}^{(i)}(\x_t)_j}+ \lambda_t(\tau_t)_j \left( 1-\frac{f_{\theta}^{(i)}(\x_t)_j}{f_{\theta_\mathrm{old}}^{(i)}(\x_t)_j}\right) \right)\right).
\end{align*}

\subsubsection{Logistic-Normal Policy}
Although the temperature-softmax policy is easy to implement, its probability ratio cannot be computed exactly because the softmax mapping is not injective: $\mathrm{softmax}(\f)=\mathrm{softmax}(\f+c)$ for any $c\in\R$. We therefore consider the logistic-normal distribution as the policy distribution, which admits an analytical probability density \citep{floto2023diffusion}.

Logistic Transform $\mathrm{LT}: \R^{V-1}\to \Delta^{V-1}$ defined by
$$\mathrm{LT}(\y):=\begin{cases}
    \frac{e^{y_i}}{1+\sum_{j=1}^{V-1}e^{y_j}}, & \text{if} \ i \in [V-1],\\
    \frac{1}{1+\sum_{j=1}^{V-1}e^{y_j}}, & \text{if} \ i=V
\end{cases}$$
is a bijection.
At time $t$ given partially masked sequence (state) $\x_t$, we have model logits $\f_\theta(\x_t) \in\R^V$. Vanilla sampling: $\mathrm{softmax}(\f_\theta(\x_t))$. Now consider exploring this logits via Logistic-Normal. Let $p_\theta(\x_t)_j:=f_\theta(\x_t)_j-f_\theta(\x_t)_V$ (subtracting a baseline to reduce one dimension) for all $j\in [V-1]$. Then $\p_\theta(\x_t)\in\R^{V-1}$. Then let the action be
$$\a_t\sim \mathrm{LT}(\cN(\p_\theta(\x_t),\sigma_t^2 I_{V-1})).$$
If $\sigma_t=0$, then $\a_t=\mathrm{softmax}(\f_\theta(\x_t))$. The density of $\a_t$ has a closed form (see, e.g., \citet[Equation (4)]{floto2023diffusion}). Now given stored $\a\sim\pi^{\theta_\mathrm{old}}$, let $\y=\mathrm{LT}^{-1}(\a)\in\R^{V-1}$ and suppose $y_j=p_{\theta_\mathrm{old},j}+\sigma \epsilon_j$ for $j=1,\cdots,V-1$ (In practice, we store $\epsilon_j$ and $p_{\theta_\mathrm{old},j}$ for all $j=1,\cdots,V-1$ during rollout), we have the probability ratio
\begin{align}
\rho^\theta=\frac{\pi^\theta(\a|\x)}{\pi^{\theta_\mathrm{old}}(\a|\x)}={}&\exp\left(
\frac{1}{2\sigma^2}\sum_{j=1}^{V-1}\left((y_j-p_{\theta_\mathrm{old},j})^2-(y_j-p_{\theta,j})^2\right)
\right)\notag\\
={}&\exp\left(
\frac{1}{2\sigma^2}\sum_{j=1}^{V-1}\left(\sigma^2\epsilon_j^2-(p_{\theta_\mathrm{old},j}-p_{\theta,j}+\sigma\epsilon_j)^2\right)
\right),\label{rho:ln}
\end{align}
in which everything is dependent on $t$. 

\begin{remark}\label{rm}
We empirically observe that the three policies described above lead to unstable training. Since these policies are supported over the entire simplex, the probability ratios in \eqref{rho:Dirichlet}, \eqref{rho:temperature}, and \eqref{rho:ln} involve summation over the entire vocabulary, which can result in exploding gradient norms when the vocabulary is large.
Furthermore, the probability ratios may become excessively large during training. In \eqref{rho:Dirichlet}, this is caused by the singularity of $\log \Gamma(\cdot)$, while in \eqref{rho:temperature}, it arises when the old logits are close to zero. Consequently, most probability ratios are clipped into the interval $[1-\epsilon,,1+\epsilon]$ by PPO or GRPO, leading to negligible policy updates and ineffective learning. Although these policies are not suitable for training, we empirically observe that exploring the entire simplex during inference can be beneficial for generating training rollouts.
\end{remark}

\subsection{Policies over the Simplex Vertices}\label{app2-4}
Motivated by these observations stated in Remark~\ref{rm}, in this section we introduce a class of policies that support on the vertices of the simplex. This design avoids the optimization issues associated with full-simplex policies and yields more stable and effective training.
We first recall the d1 loss (Eq (4) in \cite{zhao2025d}) is equivalent to (if we do not consider random masking of prompts)
\begin{equation}\label{d1loss}
\cL_{d1}(\theta):=\E_{q\sim\cD, \{(\x_{g,0},\x_{g,1},\cdots,\x_{g,T})\}_{g=1}^G\sim \pi_{\theta_\mathrm{old}}(\cdot|q)}\left[
\frac{1}{G}\sum_{g=1}^G \frac{1}{L} \sum_{t=1}^{L} \min\left(
\rho_{g,t}^\theta A_{g}, \mathrm{clip}\left(\rho_{g,t}^\theta,1-\epsilon,1+\epsilon\right) A_{g}
\right)
\right],
\end{equation}
where 
\begin{equation}\label{prob_ratio_d1}
\rho_{g,t}^\theta=\frac{p_\theta^{(i_{g,t})}(x_{g,T}^{i_{g,t}}\mid\x_{g,0})}{p_{\theta_\mathrm{old}}^{(i_{g,t})}(x_{g,T}^{i_{g,t}}\mid\x_{g,0})}.
\end{equation}
This loss is autoregressive (AR)-style, where the inner summation in \eqref{d1loss} is computed to average the sequence-length steps. Given this insight, the corresponding d1 sampling procedure here should be understood in the sense that exactly one token is unmasked at each time step; in \eqref{prob_ratio_d1}, $i_{g,t}\in [L]$ is the masked token index/position with the highest confidence in the current block at time step $t$ for the $g$-th denoising trajectory and $x_{g,t}^{i_{g,t}}=\mathtt{m}$ is to be unmasked at time $t$.

However, the actual policy for the d1 sampling is diffusion-style (see Appendix~D in \cite{zhao2025d}), i.e., two masked tokens with the highest confidence score are unmasked at one time step:
\begin{equation}\label{d1sample}
\a_t^{(i)}\sim \pi^{\theta,(i)}(\cdot\mid t,\x_t)\quad \text{then}\quad \a_t^{(i)}=
\begin{cases}
    \e_j \quad \text{w.p.}\ p_\theta^{(i)}(\x_t)_j,\ j\in\cV, &\text{for}\ i\in\cM_t^{\mathrm{Top2Conf}},\\
   \e_{x_t^i}, &\text{for}\ i\notin \cM_t^{\mathrm{Top2Conf}}
\end{cases}\in \Delta^{V-1}.
\end{equation}
Note that the action distribution for a position $i\in \cM_t^{\mathrm{Top2Conf}}$ is supported on the $V$ vertices of the probability simplex $\Delta^{V-1}$. Apparently, there is a gap between the diffusion-style sampling procedure~\eqref{d1sample} and the AR-style loss construction for the d1 loss~\eqref{d1loss}. To be more specific, if we look at the loss \eqref{d1loss} in which the probability ratios are only conditioned on the initial state/prompt $\x_0$ and the inner sum is averaged over the sequence length steps, the sampling procedure is as if
\begin{equation}\label{sample_asif_d1}
\a_t^{(i)}\sim \pi^{\theta,(i)}(\cdot\mid t,\x_t)\quad \text{then}\quad \a_t^{(i)}=
\begin{cases}
\e_j \quad \text{w.p.}\ p_\theta^{(i)}(\x_0)_j,\ j\in\cV, &\text{for}\ i=i_t,\\
   \e_{x_t^i}, &\text{for}\ i \neq i_t
\end{cases}\in \Delta^{V-1},
\end{equation}
which in fact/in reality is not this case as shown in \eqref{d1sample}. The ``as if'' argument is: since the token $x_T^{i_t} \in\cV$ is sampled from $\a_t^{(i_t)}\in\Delta^{V-1}$, by \eqref{sample_asif_d1} we have $\a_t^{(i_t)}=\e_{x_T^{i_t}}$, and thus the policy probability ratio 
$$
\frac{\pi^\theta(\a_t\mid t,\x_t)}{\pi^{\theta_\mathrm{old}}(\a_t\mid t,\x_t)}=\frac{\pi^{\theta,(i_t)}(\a_t^{(i_t)}\mid t,\x_t)}{\pi^{\theta_\mathrm{old},(i_t)}(\a_t^{(i_t)}\mid t,\x_t)}=\frac{p_\theta^{(i_t)}(x_T^{i_t}\mid \x_0)}{p_{\theta_\mathrm{old}}^{(i_t)}(x_T^{i_t}\mid \x_0)},
$$
which coincides with the probability ratio \eqref{prob_ratio_d1} of d1.
Motivated by our continuous-time RL perspective, we bridge this gap by revising the d1 loss into our loss \eqref{ourloss}, which is consistent with the underlying continuous-time RL dynamics and the inference procedure. 

However, evaluating \eqref{ourloss} exactly requires one forward
pass on each reconstructed state $\x_{g,t}$, for every step $t=1,\dots,T$ with
gradients enabled. With $T$ on the order of $128$--$512$ this is the dominant cost
in both memory and wall-clock time: the autograd graph for every step is held until the backward pass. To address this, one can consider gradient checkpointing across the step loop by recomputing each step's forward during backward instead of storing all activations. This keeps the math identical and brings peak memory back to roughly one forward graph at a time, at the cost of one extra forward pass per step. The extra recomputation makes the policy pass slower, with roughly $2\times$ the forwards for that pass. The other practical option is to subsample the trajectory described in Section~\ref{sec:posttraining}, compute the loss on a random subset of $N$ denoising steps each update, trading exactness for memory. Trajectory subsampling reduces the number of evaluated steps per optimizer update
from $T$ to a small $N\ll T$, giving an approximately $T/N$ reduction in the
number of policy forward/backward passes, at the cost of a stochastic but unbiased estimate (Proposition~\ref{unbias}) of the per-rollout mean. In Figure~\ref{fig:ablation_N_gsm8k}, we
show the accuracy trade-off of CTRL with different values of $N$ in the GSM8K benchmark. We found that $N=8$ is a favorable balance between the accuracy and training efficiency, which reduces the convergence time without compromising the final reward. 

\begin{figure}[!htbp]
\centering
\includegraphics[width=0.6\linewidth]{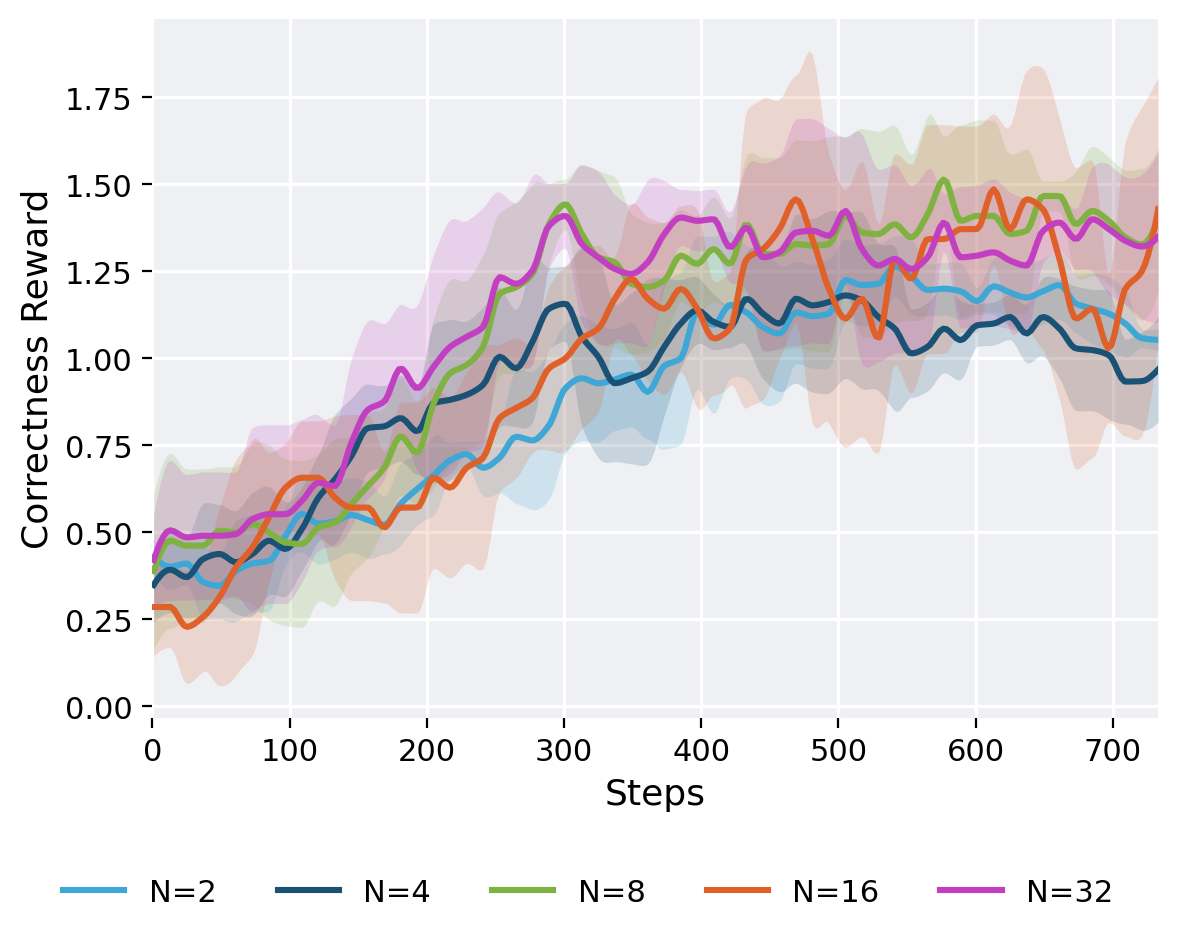}
\caption{Ablation on $N$ on GSM8K.}\label{fig:ablation_N_gsm8k}
\end{figure}

\begin{proposition}\label{unbias}
Write
$\ell_{t}(\theta):=\min(\rho_{t}A_{t},\mathrm{clip}(\rho_{t},1-\epsilon,1+\epsilon)A_{t})$
for the per-step GRPO surrogate and $\bar{\ell}_t(\theta):=\sum_{t=1}^T \ell_t(\theta)/T$. For a subset $\cT$ drawn uniformly without replacement from
$[T]$ with $|\cT|=N$,
\[
  \E_{\cT}\big[\widehat{\ell}_{\cT}(\theta)\big]
  \;=\;
  \bar\ell(\theta).
\]
\end{proposition}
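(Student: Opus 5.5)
The plan is to write the subsampled estimator with indicator variables, so that all the randomness coming from $\cT$ is isolated in Bernoulli factors. We take $\widehat{\ell}_{\cT}(\theta):=\frac{1}{N}\sum_{t\in\cT}\ell_t(\theta)$, matching the definition of $\hat{\cL}_\cT$ in Section~\ref{sec:posttraining}, and read $\bar\ell(\theta)$ as $\bar\ell_t(\theta)=\frac1T\sum_{t=1}^T\ell_t(\theta)$ from the statement. Then
\[
\widehat{\ell}_{\cT}(\theta)=\frac{1}{N}\sum_{t=1}^{T}\mathbf{1}\{t\in\cT\}\,\ell_t(\theta).
\]
The key structural fact is that the same $\cT$ is reused for the current, old, and reference passes. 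Each $\ell_t(\theta)$ is therefore a fixed, deterministic quantity given the rollout and $\theta$: it depends on $\rho_t$ and $A_t$, and neither depends on which other steps were selected. So, conditional on the trajectories, the only randomness is in the indicators.

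The next step is to compute the inclusion probability. Under uniform sampling without replacement of an $N$-subset of $[T]$, there are $\binom{T-1}{N-1}$ subsets containing a fixed $t$, out of $\binom{T}{N}$ in total. Hence
\[
\P(t\in\cT)=\binom{T-1}{N-1}\Big/\binom{T}{N}=\frac{N}{T}.
\]
Since the sum is finite, linearity of expectation gives
\[
\E_\cT[\widehat{\ell}_\cT(\theta)]=\frac1N\sum_{t=1}^T\frac NT\,\ell_t(\theta)=\bar\ell(\theta).
\]

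Finally, the unbiasedness should be lifted to the loss. Averaging over $g$ and taking the outer expectation over rollouts from $\pi^{\theta_\mathrm{old}}$ yields $\E_\cT[\hat{\cL}_\cT(\theta)]=\cL(\theta)$. This uses Fubini/tower, which is valid because $\cT$ is drawn independently of the rollouts and the integrand is a finite sum.

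There is no real obstacle. The only point needing care is to state explicitly that $\ell_t$ does not depend on $\cT$, which requires the consistent reuse of the subset across the ratio computations. If one also wants unbiasedness of the gradient, the same argument applies verbatim with $\nabla_\theta\ell_t$ in place of $\ell_t$, because $\cT$ does not depend on $\theta$.
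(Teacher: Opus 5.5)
Your proof is correct and follows the paper's argument: write the subsampled estimator as a finite sum over $[T]$, use the marginal inclusion probability $\P(t\in\cT)=N/T$ under uniform sampling without replacement, and conclude by linearity of expectation. Your additional remarks are valid refinements of that same route: the binomial-coefficient derivation of $N/T$, the observation that each $\ell_t$ does not depend on $\cT$, and the lift to $\hat{\cL}_\cT$ via Fubini.
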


\textit{Proof of Proposition~\ref{unbias}.}
For uniform sampling without replacement, each index $t$ has marginal inclusion
probability $\P(t\in\cT)=N/T$. Hence, 
\[
  \E_{\cT}\big[\widehat{\ell}_{\cT}(\theta)\big]
  = \frac{1}{N}\sum_{t=1}^{T}\P(t\in\cT)\,\ell_t(\theta)
  = \frac{1}{N}\sum_{t=1}^{T}\frac{N}{T}\,\ell_t(\theta)
  = \frac{1}{T}\sum_{t=1}^{T}\ell_t(\theta)
  = \bar\ell(\theta).
\]
$\hfill\square$

\section{PPO Implementation Insights for DLLMs}\label{app3}
In this section, we present an alternative approach to value function approximation for PPO that avoids training a value network by leveraging sampled rollouts and a terminal reward function, inspired by DISPO \citep{oba2026diffusion}. In PPO, computing the intermediate advantage requires estimating the value function: for $\a_t\sim \pi^\theta(\cdot\mid t,\x_t)$, we can write 
$$
q(t,\x_t,\a_t;\pi^\theta(\cdot\mid t,\x_t))=\frac{1}{1-t}\sum_{i\in\cM_t^{\mathrm{TopkConf}}}\sum_{j\in\cV} (\a_t^{(i)})_j\left(V^{\theta}(t,\x_t^{\backslash i}\odot j)-V^{\theta}(t,\x_t)-\beta\log\frac{p_\theta^{(i)}(\x_t)_j}{p_{\theta_\mathrm{pre}}^{(i)}(\x_t)_j}\right).
$$
One may drop the scalar factor $\frac{1}{1-t}$ and set $\beta=0$ first, then we have $q_t:=q(t,\x_t,\a_t;\pi^\theta(\cdot\mid t,\x_t))\approx\sum_{i\in\cM_t^{\mathrm{TopkConf}}}\sum_{j\in\cV} (\a_t^{(i)})_j\left(V^{\theta}(t,\x_t^{\backslash i}\odot j)-V^{\theta}(t,\x_t)\right)$. Noting that $\a_t^{(i)}$ is a probability distribution over $\cV$, we can write
\begin{align*}
q_t\approx{}&\sum_{i\in\cM_t^{\mathrm{TopkConf}}} \E_{j \sim \a_t^{(i)}}\left(V^{\theta}(t,\x_t^{\backslash i}\odot j)-V^{\theta}(t,\x_t)\right)\\
\approx{}& \sum_{i\in \cM_t^{\mathrm{TopkConf}}} \left(V^{\theta}(t,\x_t^{\backslash i}\odot x_{t+1}^i)-V^{\theta}(t,\x_t)\right)\\
\approx{}& V^\theta(t,\x_{t+1})-V^\theta(t,\x_t),
\end{align*}
where the last approximation is because we simultaneously unmask the $k$ tokens $x_{t+1}^i\ (\text{for} \ i \in \cM_t^{\mathrm{TopkConf}})$ at time step $t$. To estimate the value functions, in the rollouts, for each time step $t$, we resample $Z$ outputs $\{\o_{t,z}\}_{z=1}^Z$ from $\x_t$ by one-step generation using the current logits $\f_\theta(\x_t)$, and then estimate $V^\theta(t,\x_t)$ via the Monte Carlo average
$\frac{1}{Z}\sum_{z=1}^Z \mathrm{RM}(\o_{t,z})$. Hence, we have the estimator
$$
q_t\approx \frac{1}{Z} \sum_{z=1}^Z \left(\mathrm{RM}(\o_{t+\Delta t,z})-\mathrm{RM}(\o_{t,z})\right).
$$

In a rollout generated by $\pi^{\theta_{\mathrm{old}}}$, given $\x_t$ at time $t$, we resample $Z$ outputs $\{\o_{t,z}\}_{z=1}^Z$ where each $\o_{t,z}$ fills all the masked positions of $\x_t$ via one-step generation according to $\mathrm{softmax}(\f_{\theta_{\mathrm{old}}}^{(i)}(\x_t))$ ($i\in [L]:x_t^i=\mathtt{m}$), and compute $V_t:=\frac{1}{Z}\sum_{z=1}^Z \mathrm{RM}(\o_{t,z})$. We store only $V_t$ and $V_{t+1}$ in order to compute $q_t=V_{t+1}-V_t$. After all $\{q_t\}_{t\in [T]}$ have been computed and stored, the intermediate values $\{V_t\}_{t\in [T]}$ are discarded, thereby reducing the memory footprint. However, we found that this estimator for $q$ leads to unstable training and exhibits high variance, possibly because its estimates are coarse at small $t$ during the early denoising steps due to one-step generation. We leave further stabilization techniques and engineering improvements for PPO to future work.

\section{Experiment Details}
\subsection{Synthetic Example}\label{app4-1}

DAM gives the entropy-regularized optimum $p^*(\mathbf{X}_1)\propto p^*(\mathbf{X}_1)e^{h(\mathbf{X}_1)}$:
\begin{equation}
  p^*(i,j)\propto p^\mathrm{base}(i,j)\,\exp\!\big(h(i,j)/\beta\big)
  =\frac{\exp\!\big(h(i,j)/\beta\big)}{\sum_{a,b}\exp\!\big(h(a,b)/\beta\big)},
  \label{eq:target}
\end{equation}
since $p^\mathrm{base}$ is uniform.

\textbf{Reward Functions.} Following \cite{so2026discrete}, the (terminal) reward $h(\cdot)$ is assigned via upweighting the diagonal blocks by $4.6$, superdiagonal and subdiagonal blocks by $4.0$, and other off-diagonal blocks by $3.4$. 

\textbf{Implementation.} All experiments are conducted on CPU. The hyperparameters for the three methods are summarized in Table~\ref{tab:checkerboard_compare}.

\begin{table}[h]
\centering
\small
\begin{tabular}{llll}
\toprule
\textbf{Quantity} & \textbf{PPO} & \textbf{DAM} & \textbf{D1} \\
\midrule
Objective & value matching & adjoint matching & reward max.\ (GRPO) \\
Critic & exact tabular & none (adjoint) & none (critic-free) \\
Estimator & exact expectation & MC adjoint, $K{=}12$ & one-step log-prob \\
Group size $G$ & --- & --- & $64$ \\
Inner updates $\mu$ & $4$ & $1$ & $4$ \\
Clip $\epsilon$ & $0.2$ & --- & $0.2$ \\
KL weight $\beta$ & $6.0$ & $6.0$ & $6.0$ \\
Learning rate & $0.3$ & $0.3$ & $0.3$ \\
\bottomrule
\end{tabular}
\caption{Hyperparameters. The problem definition is identical across
methods, so performance differences are attributable to the algorithm.}\label{tab:checkerboard_compare}
\end{table}

\paragraph{Evaluation.}
After each iteration we compute the exact terminal grid $p_\theta(i,j)$ and report
$\mathrm{KL}(p^*\,\Vert \,p_\theta)$ against \eqref{eq:target}, the average reward
$\E_{p_\theta}[h]$, and the aggregated mass of each of the $25$ blocks. Table~\ref{tab:checkerboardnumber} reports the quantitative convergence results in terms of KL divergence and average reward, corresponding to the last two subfigures in Figure~\ref{fig:compare_checkerboard}.

\begin{table}[h]
\centering
\begin{tabular}{lcc}
\toprule
\textbf{Method} & $\mathrm{KL}(p^*\,\Vert\,p_\theta)$  & avg.\ reward \\
\midrule
PPO            & $0.00044$ & $2.793$ \\
DAM                   & $0.004$  & $2.560$ \\
D1 & $0.0769$\, & $2.282$ \\
\midrule
optimal $p^*$      & $0$ & $2.850$ \\
\bottomrule
\end{tabular}
\caption{Final performance on the checkerboard task ($400$ iterations).}
\label{tab:checkerboardnumber}
\end{table}

\subsection{Mathematical Reasoning}\label{app4-2}
\textbf{Inference.} Following d1 \citep{zhao2025d}, our training rollouts are generated by Top2 confidence (i.e., $k=2$ in the main text) and random sampling using the semi-autoregressive decoding strategy. Specifically, we have $T=L/2$ and the $2$ tokens with the highest confidence within the current block are unmasked, with the unmasked tokens sampled from $\p_\theta(\x_t)$, at each time step. The block size is set to $32$, and once
all the tokens in the current block are unmasked, we move to the next block of $32$ tokens. To encourage exploration, we adopt a dimension-aware temperature-softmax strategy, where the temperature is sampled from a time-independent exponential distribution, described in Appendix~\ref{app2-2}, to encourage exploration:
$$
\p_\theta^{(i)}(\x_t)=\mathrm{softmax}((f_\theta^{(i)}(\x_t)/\tau_j)_{j\in \cV}),\quad i\in \cM_t^\mathrm{Top2Conf},
$$
where $\tau_j \overset{\text{i.i.d.}}{\sim} \mathrm{Exp}(\lambda),\ j\in\cV$. In Figure~\ref{fig:ablation_lambda_gsm8k}, we show the ablation results with different choices of $\lambda$ on GSM8K with a sequence length of 128. We can see that $\lambda=2.0$ yields a slightly better result, which we take in our experiments. When $\lambda$ is either too large or too small, the exploration becomes excessive, causing the model logits to be overly perturbed and consequently degrading generation quality.

The terminal reward functions $\mathrm{TRF}(\cdot)$ for the four tasks are verifiable reward functions taken the same as d1 (see Appendix D.1.1 in \cite{zhao2025d}). Our intermediate verifiable reward functions $\mathrm{IRF}(\cdot)$ are specified as follows:
\begin{itemize}
    \item \textbf{GSM8K.} The intermediate reward consists of three non-positive components: 

    \begin{enumerate}
    \item an \textit{answer-region legality penalty}, applying $-1.0$ if the \verb|<answer>|$\cdots$\verb|</answer>| region is fully visible but contains any visible character that does not belong to the integer category (letters, currency symbols, commas, decimal points, or digit runs split by visible content); 
    
    \item a \textit{trailing-content penalty} of $-0.001$ per visible character after \verb|</answer>|. The maximal amount of penalty from this component is capped at $-0.5$ ;
    
    \item a \textit{tag-structure penalty} of $-0.125$ per duplicated fully visible tag and $-0.5$ for an impossible tag order (e.g. \verb|</answer>|$\cdots$\verb|<answer>|). Masked positions are never penalized.
    \end{enumerate}
    
    \item \textbf{MATH500.} The intermediate reward consists of three non-positive components:
    \begin{enumerate}
    \item an \textit{answer-region legality penalty}, adapted to the \verb|\boxed{}| format of the terminal reward: applying $-1.0$ if the \verb|<answer>|$\cdots$\verb|</answer>| region is fully visible but contains no \verb|\boxed{|$\cdots$\verb|}| expression;

    \item a \textit{trailing-content penalty} of $-0.001$ per visible character after \verb|</answer>|. The maximal amount of penalty from this category is capped at $-0.5$;

    \item a \textit{tag-structure penalty} of $-0.125$ per duplicated fully visible tag and $-0.5$ for an impossible tag order. Masked positions are never penalized.
    \end{enumerate}
    
    \item \textbf{Countdown.} The sequence is split at the first fully visible \verb|<answer>| tag, and the intermediate reward consists of two components, with target correctness left to $\mathrm{TRF}(\cdot)$:
    \begin{enumerate}
    \item an \textit{equation-shaping award} on the reasoning region: each unique, fully visible arithmetic equation is verified with exact rational arithmetic and awarded $+0.05$ for correctness and $-0.5$ otherwise. Partially masked equations are skipped;

    \item an \textit{answer-region legality penalty}, applying a flat $-1.0$ if any illegal visible character appears (only digits, $+$, $-$, $*$, $/$, parentheses, and whitespace are permitted) or if a fully determined digit run does not match an available number.
    \end{enumerate}

    \item \textbf{Sudoku.} The intermediate penalty is evaluated in the following steps:
    \begin{enumerate}
    \item Pad or truncate the extracted answer to a 16-character grid as in the terminal validator, and only the cells from positions empty in the original puzzle are inspected;

    \item Check each inspected cell for \textit{value legality}, only a value belongs to $\{1,2,3,4\}$ is legal;

    \item Compute intermediate penalty according to \[r_{\text{intermediate}}=\frac{\#\{\text{invalid inspected grids}\}}{\#\{\text{all inspected grids}\}}.\]
    \end{enumerate}
\end{itemize}
All four $\mathrm{IRF}$'s are mask-tolerant: they score partially denoised sequences by grading only visible content. The intermediate scores are scaled by a tunable weight hyperparameter $\alpha_{\mathrm{intermediate}}$ ($0.05$ by default).

\textbf{Implementation.} We build our implementation on top of the same codebase of d1\footnote{https://github.com/dllm-reasoning/d1}. Our CTRL uses Low-Rank Adaptation (LoRA) \citep{hu2022lora} with a rank of $r=128$ and scaling factor $\alpha=64$, and AdamW \citep{loshchilov2018decoupled} parameters $(\beta_1,\beta_2)=(0.9,0.99)$ with a weight decay of $0.1$ and learning rate $3\times 10^{-6}$. All our training is conducted on 7 NVIDIA L40S GPUs, batch size of $6$ per GPU and use gradient accumulation steps of $2$. For GRPO hyperparameters, the group size $G=6$, the number of inner gradient update is $6$, and the clip parameter $\epsilon=0.5$. The same configuration is also used for the three baseline training, and we set $N=16$ in d2 \citep{wang2026d2} as the authors suggest.

\begin{figure}[!htbp]
\centering

\begin{minipage}{0.48\linewidth}
    \centering
    \includegraphics[width=\linewidth]{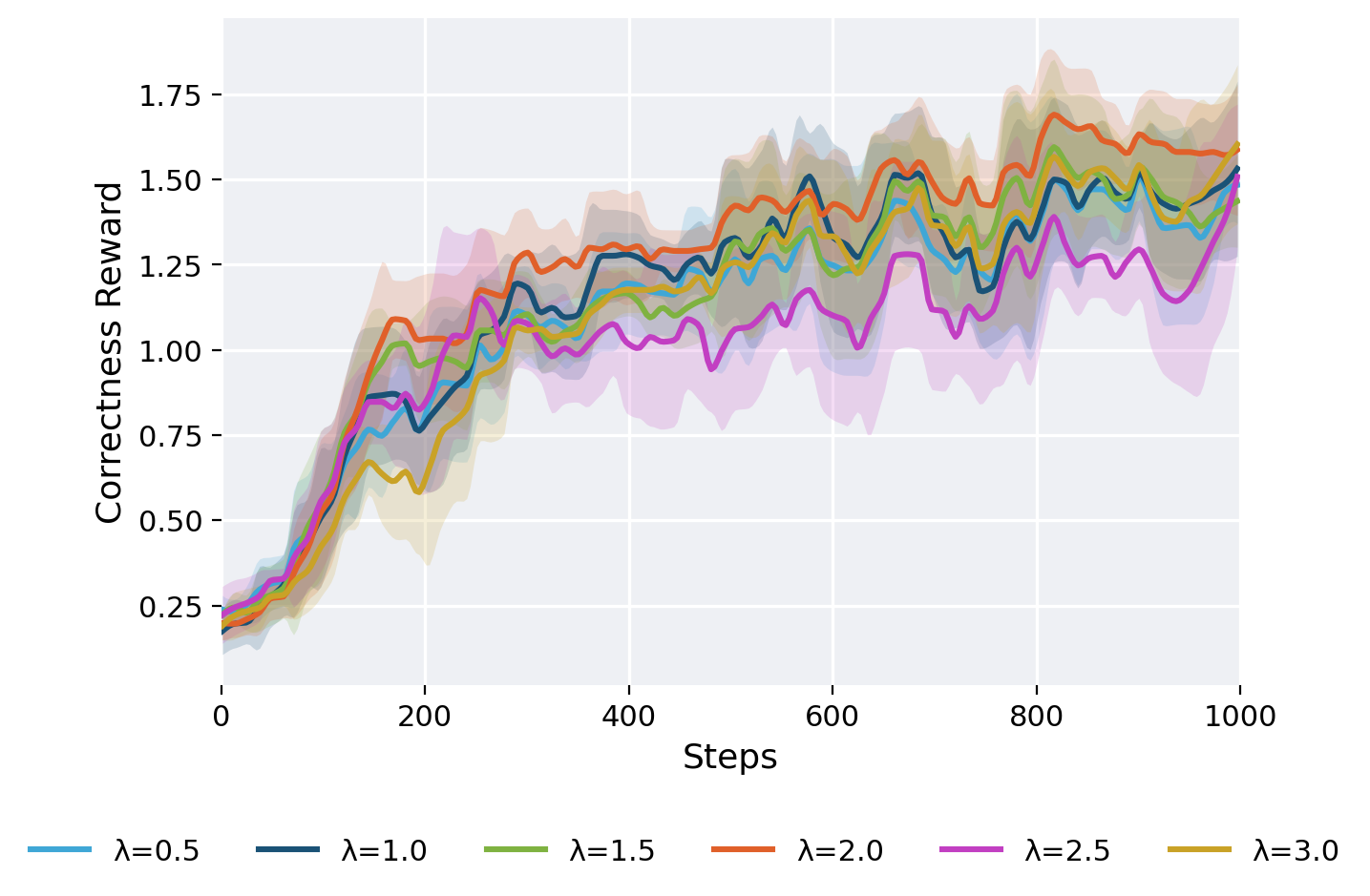}
    \caption{Ablation on $\lambda$ on GSM8K.}
    \label{fig:ablation_lambda_gsm8k}
\end{minipage}
\hfill
\begin{minipage}{0.44\linewidth}
    \centering
    \includegraphics[width=\linewidth]{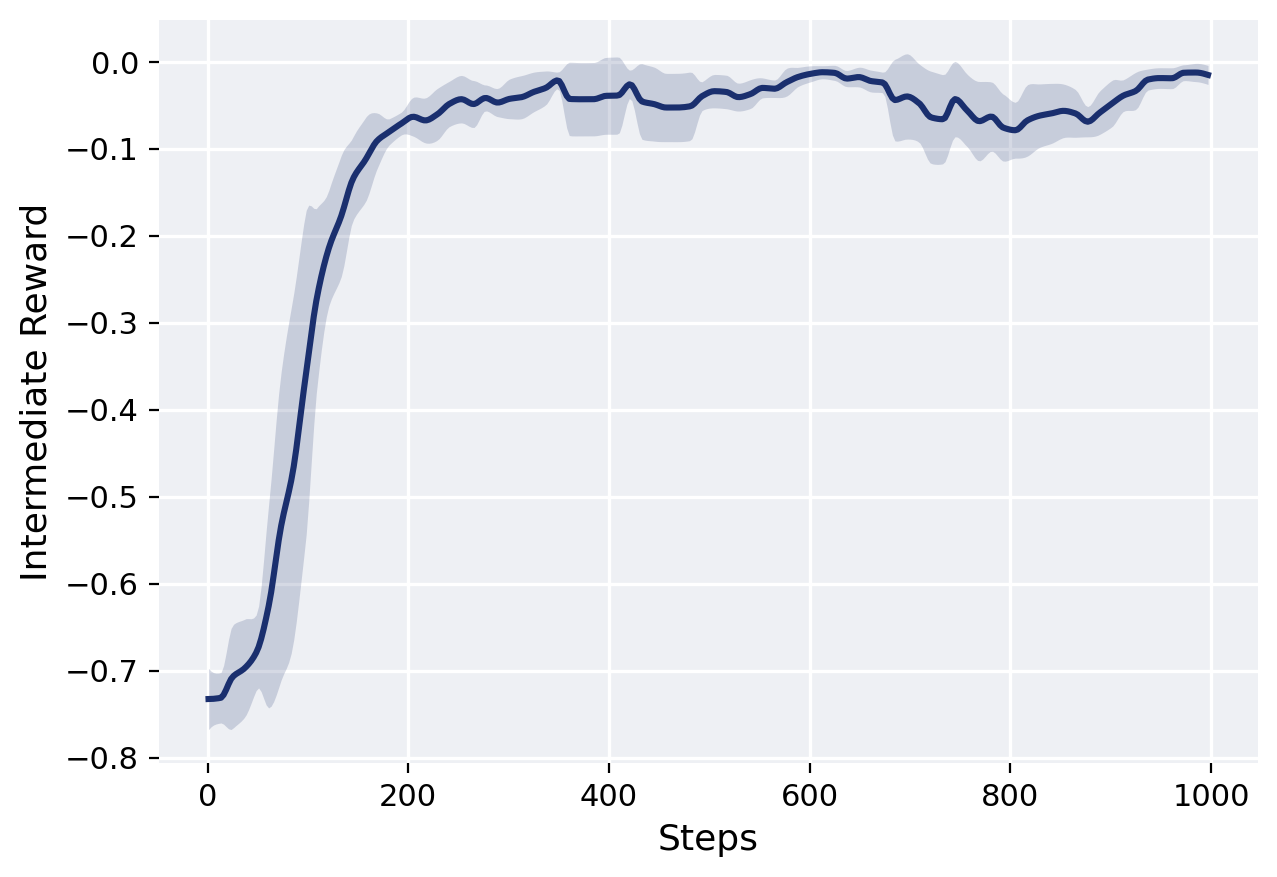}
    \caption{Convergence of intermediate reward of Sudoku.}
    \label{fig:sudoku_interreward}
\end{minipage}

\end{figure}

\end{appendix}